\newcommand{\sys}{PSEO\xspace}
\newtheorem{theorem}{Theorem}
\newtheorem{lemma}{Lemma}
\title{\sys: Optimizing Post-Hoc Stacking Ensemble Through \\Hyperparameter Tuning}
\author{
    Beicheng Xu\textsuperscript{\rm 1}, Wei Liu\textsuperscript{\rm 1}, Keyao Ding\textsuperscript{\rm 1}, Yupeng Lu\textsuperscript{\rm 1}, Bin Cui\textsuperscript{\rm 1}\thanks{Bin Cui is the corresponding author}
}
\title{My Publication Title --- Single Author}
\author {
    Author Name
}
\title{My Publication Title --- Multiple Authors}
\author {
    % Authors
    First Author Name\textsuperscript{\rm 1,\rm 2},
    Second Author Name\textsuperscript{\rm 2},
    Third Author Name\textsuperscript{\rm 1}
}
\begin{document}

\maketitle

\begin{abstract}
The Combined Algorithm Selection and Hyperparameter Optimization (CASH) problem is fundamental in Automated Machine Learning (AutoML). Inspired by the success of ensemble learning, recent AutoML systems construct post-hoc ensembles for final predictions rather than relying on the best single model.
% However, while most CASH methods conducted an extensive search for the optimal single model, they typically use a fixed strategy during the ensemble phase, which does not adapt to the specific task characteristics.
However, while most CASH methods conduct extensive searches for the optimal single model, they typically employ fixed strategies during the ensemble phase that fail to adapt to specific task characteristics.
To tackle this issue, we propose \sys, a framework for post-hoc stacking ensemble optimization. 
First, we conduct base model selection through binary quadratic programming, with a trade-off between diversity and performance.
Furthermore, we introduce two mechanisms to fully realize the potential of multi-layer stacking.
Finally, \sys builds a hyperparameter space and searches for the optimal post-hoc ensemble strategy within it. 
Empirical results on 80 public datasets show that \sys achieves the best average test rank (2.96) among 16 methods, including post-hoc designs in recent AutoML systems and state-of-the-art ensemble learning methods.
\end{abstract}
% Uncomment the following to link to your code, datasets, an extended version or similar.
% You must keep this block between (not within) the abstract and the main body of the paper.
\begin{links}
    \link{Code}{https://github.com/PKU-DAIR/mindware}
    % \link{Datasets}{https://aaai.org/example/datasets}
    \link{Extended version}{https://arxiv.org/pdf/2508.05144}
\end{links}
\section{Introduction}

In recent years, machine learning has advanced in diverse application domains, such as computer vision~\cite{he2016deep,JCST-2309-13814}, and recommendation systems~\cite{JCST-2101-11277,chen2025privacy}. Despite these advancements, developing tailored solutions with strong performance remains a knowledge-intensive task, requiring careful selection of suitable ML algorithms and hyperparameter tuning.
To lower barriers and streamline the deployment of machine learning applications, the AutoML community has introduced the Combined Algorithm Selection and Hyperparameter Optimization (CASH) problem~\cite{thornton2013auto} and proposed several methods~\cite {hutter2019automated,he2021automl} to automate this optimization process.
% Specifically, the CASH problem seeks to identify the best combination of machine learning algorithms and their corresponding hyperparameters for a given task, with the aim of maximizing model performance.
Furthermore, it’s widely acknowledged ensembling promising models often outperforms single ones ~\cite{feurer2015efficient,luo2024survey,zhu2025relational}.  
And ensemble strategies can be frequently found in the top solutions of prediction competitions~\cite{koren2009bellkor,hoch2015ensemble}.
% Although methods like Auto-WEKA~\cite{thornton2013auto} and EO~\cite{levesque2016bayesian} attempt to directly search for an ensemble solution in a single step, they are constrained by the vast search space, which includes both the ensemble strategy and the type and hyperparameters of each base model.
As a result, recent AutoML systems~(e.g., Auto-sklearn~\cite{feurer2022auto}, AutoGluon~\cite{erickson2020autogluon}, and VolcanoML~\cite{li2023volcanoml}) build post-hoc ensembles using base models from the optimization process of the optimal individual model and show better empirical results than the best single models.

% solve it in two steps: first, they search for the optimal individual model, and then adopt a specific ensemble strategy based on the observation history generated in the first step, which we refer to as the post-hoc ensemble designs. 
% The post-hoc ensemble design avoids exploring overly large search spaces and shows better empirical results than searching for an ensemble in one step or using the best single learner.

\begin{figure}[t!]
	\centering
	% \scalebox{0.95}[.95] {
	\includegraphics[width=\linewidth]{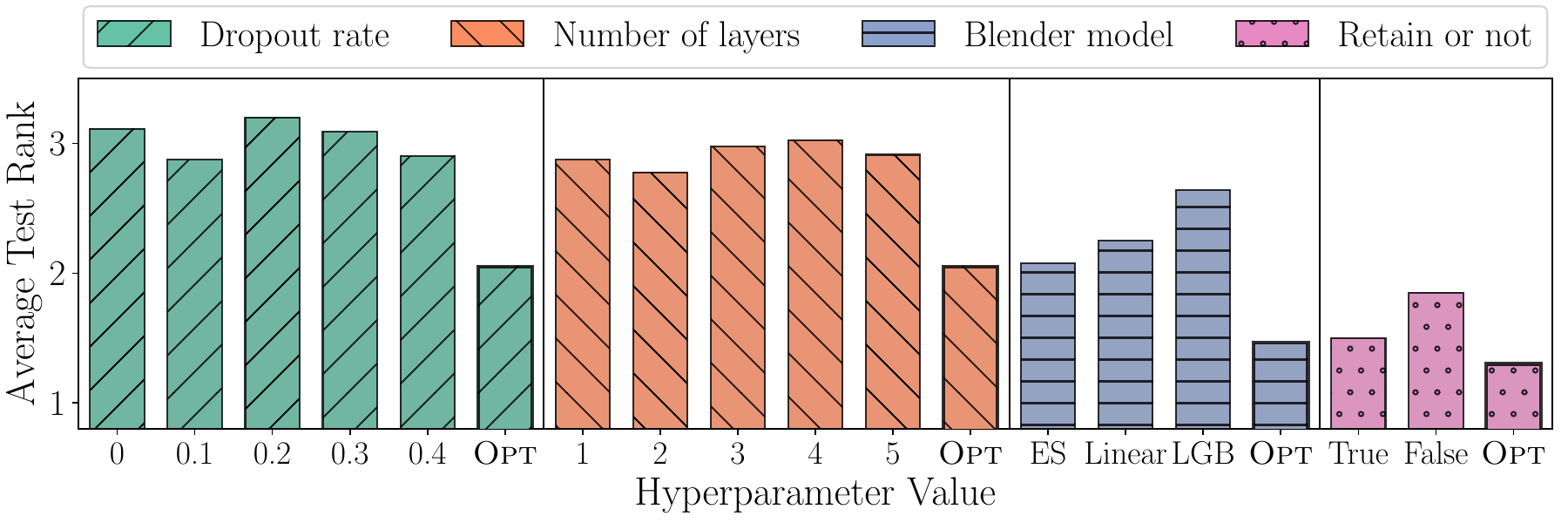}
         % }
        % \vspace{-2em}
	\caption{Average test rank of staking across each hyperparameter's values with other parameters fixed.}
        \label{fig:struc}
        % \vspace{-1em}
\end{figure}
% Dropout and Retain are two mechanisms proposed by \sys.

% \begin{table*}[t]
%     \vspace{0mm}
%     \centering
%     % \caption{\small Post-hoc Ensemble Strategy of Current AutoML System.}
    
%     \begin{tabular}{lccc}
%     \toprule
%         \textbf{AutoML System} & \textbf{Base Model Selection} & \textbf{Number of layers} &\textbf{Blender Model}\\
%     \midrule
%         AutoGluon           & All       & Multi-Layer & Ensemble Selection(ES)  \\ 
%         LightAutoML         & Best of Each Class                  & One/Two-Layer   & Weighted Avaraging  \\ 
%         H2O                 & All/Best of Each Class                  & One-Layer   & Linear/LightGBM(LGB)/... \\
%         % Auto-Sklearn        & User Defined      & Greedy                              & Ensemble Selection \\
%         % Auto-Pytorch        & User Defined      & Greedy                              & Ensemble Selection \\
%         % MLJAR               & All               & \textbackslash                      & Ensemble Selection \\
%     \bottomrule
%     \end{tabular}
    
%     \caption{Stacking strategy of current AutoML systems.}
%     \label{tab:automl_sys_ens}
%     % \vspace{3mm}
% \end{table*}

Despite the effectiveness of those post-hoc ensemble designs, they typically employ a fixed ensemble strategy or leave it to the user to specify, rather than performing comprehensive optimization like with a single optimal model. 
However, as shown in Figure~\ref{fig:struc} and detailed in Appendix~\ref{app:exp_effectiveness_of_struc}, when we apply stacking ensembles, fixing other hyperparameters and performing a grid search for each hyperparameter separately, the tuned value according to validation performance~(\textsc{Opt}) achieve higher average test ranks than other fixed-value strategies, demonstrating the potential of optimizing ensembles.
% This means that achieving truly better ensemble solutions still necessitates substantial experimentation time and expert knowledge.
% This means that truly effective ensemble solutions still require significant experimentation and expert insight.
% with post-hoc ensembles 
Additionally, existing AutoML systems seldom explore base model selection for ensembles; instead, all available models or the best subset are simply used.
% On one hand, both theoretical analyses and empirical studies have supported the concept that "many-could-be-better-than-all" when selecting models for an ensemble~\cite{martinez2008analysis}. On the other hand, selecting only the best subset lacks the assurance of base model diversity, which is detrimental to the final ensemble performance~\cite{lecun2015deep}.
Therefore, how to search for the optimal post-hoc ensemble solution, including base model selection and strategies for ensemble base models, remains an unresolved problem.

The optimization of post-hoc ensemble is non-trivial.
When we attempt to characterize an ensemble strategy using hyperparameters and search for an optimal ensemble, two challenges arise:
\textbf{C1}. Since the input for post-hoc ensembles is a discrete pool of pre-trained base models, selecting models for integration is a combinatorial optimization problem, where the hyperparameter space can be exceedingly large.
As a result, it is challenging to reduce the number of hyperparameters required to characterize the problem, enabling more efficient exploration.
% \textbf{C2}. It is challenging to fully realize the potential of deep stacking due to the risk of overfitting by excessively relying on certain base models. Moreover, as layers increase, the stacker model's stability can decrease, limiting the depth of stacking.
\textbf{C2}. It is challenging to fully realize the potential of the post-hoc ensemble and provide a flexible framework adaptable to various tasks, along with optional strategies for addressing specific issues.

In this paper, we propose \sys, a new framework that optimizes post-hoc stacking ensemble through hyperparameter tuning.
The contributions are summarized as follows: 1) To the best of our knowledge, \sys is the first work to optimize post-hoc stacking ensemble through hyperparameter tuning.
2) To deal with \textbf{C1}, \sys transforms the base model selection task into a binary quadratic programming problem and uses only two hyperparameters to control the optimization objective, achieving a trade-off between the diversity of selected base models and individual performance.
3) For \textbf{C2}, \sys introduces the Dropout and Retain mechanisms to mitigate the problems of overfitting and feature degradation, respectively.
% 4) Instead of exhibiting fixed strategies, \sys builds a hyperparameter space for post-hoc stacking ensemble, and uses a Bayesian optimizer to iteratively search for the optimal ensemble strategy within it.
% 5) We evaluate \sys on 80 public datasets. Empirical results show that \sys achieves the best average rank (2.96) on test errors among 16 methods, including post-hoc designs in recent AutoML systems and state-of-the-art methods for ensemble learning on CASH problems.
4) Instead of relying on fixed strategies, \sys defines a hyperparameter space for post-hoc stacking ensembles and employs Bayesian optimization to search for the optimal strategy iteratively.
Empirical results on 80 public datasets show that \sys achieves the best average test rank (2.96) among 16 methods.
%including post-hoc designs and state-of-the-art ensemble learning methods.

% We summarize our contributions as follows:
% \begin{itemize}
%     \item To the best of our knowledge, \sys is the first work to optimize post-hoc stacking ensemble through hyperparameter tuning.
%     \item To enhance the flexibility and potential of the post-hoc ensemble,  we 1) transform base model selection into a binary quadratic optimization problem; 2) identify issues of overfitting and feature degradation in deep stacking and propose two mechanisms to mitigate them.
%     \item To optimize post-hoc ensemble for diverse tasks, we first use hyperparameters to control the solution of the quadratic problem and the structure of the stacking. Then we adopt Bayesian optimization to iteratively search for the optimal ensemble strategy within the search space.
%     \item We evaluate \sys on 80 public datasets. Empirical results show that \sys achieves the best average rank (2.96) on test errors among 16 methods, including post-hoc designs in recent AutoML systems and state-of-the-art methods for ensemble learning on CASH problems.
% \end{itemize}

\section{Background}
\label{sec:background}

\subsection{Post-Hoc Ensemble}

In post-hoc ensemble, we are given a fitted model pool $\mathbb{M}=\{m_1, m_2, ..., m_n\}$, which consists of all base models that are trained on $\mathcal{D}_{train}$ and validated during the search for the optimal individual model.
The goal is to aggregate an optimal subset of models from $\mathbb{M}$ with an ensembler $f(\cdot; \theta)$ to minimize a loss function $\mathcal{L}$ in validation set $\mathcal{D}_{val}$, which is:
% \begin{equation}
%     \min_{m_1, ..., m_{n'} \in \mathbb{M};\ \theta} \mathcal{L}(f(m_1, ..., m_{n'}; \theta), \mathcal{D}_{val})
% \end{equation}
\begin{equation}
    \min_{m_{i_1}, ..., m_{i_{n'}} \in \mathbb{M};\ \theta} \mathcal{L}(f(m_{i_1}, m_{i_2}, ..., m_{i_{n'}}; \theta), \mathcal{D}_{val})
\end{equation}
Intuitively, a post-hoc ensemble needs to address two key problems: (1) how to select an appropriate subset of base models from the candidate pool $\mathbb{M}$, and (2) how to determine the best ensembler $f(\cdot; \theta)$ to produce the final output.

\subsection{Stacking}
Among different ensemble strategies (e.g., bagging~\cite{dietterich2000ensemble}, boosting~\cite{freund1996experiments}, ensemble selection~\cite{caruana2004ensemble}),
we adopt stacking~\cite{van2007super} as the ensembler because: 
1) It has shown excellent experimental results, outperforming other methods~\cite{puruckerassembled}. 
2) Stacking offers inherent flexibility.
3) It's a general framework, as commonly used methods can be considered special cases of one-layer stacking and serve as the blender model.

Recently, some AutoML systems like AutoGluon~\cite{erickson2020autogluon}, LightAutoML~\cite{vakhrushev2021lightautoml}, and H2O~\cite{ledell2020h2o} adopt stacking for post-hoc ensemble, which trains a blender model to aggregate the predictions of a set of base models~\cite{breiman1996stacked}. 
%  The key insight of stacking is that the stacker model can improve upon the shortcomings of individual base predictions and leverage interactions between base models to enhance predictive performance~\cite{van2007super}.
% Furthermore, in multi-layer stacking, the predictions from stacker models of each layer serve as inputs for the stacker models of the next layer, creating a hierarchical structure.
Furthermore, in multi-layer stacking, predictions from each layer's stacker models become inputs for the next layer, forming a hierarchical structure.
Table~\ref{tab:automl_sys_ens} shows the stacking strategies of the prominent open-source AutoML systems. 
% However, existing AutoML systems select the base models simply: either all models are included, or the best-performing model from each algorithm class/family is used. 
% However, as shown in Table 1, the stacking strategies of existing AutoML systems are typically pre-defined or are left for the user to configure, rather than being automatically optimized.
We identify three common issues in their post-hoc stacking strategies.

{\bf Issue \#1: Existing approaches lack an effective mechanism for selecting base models.}
% Both theoretical analyses and empirical studies have supported a concept that "many-could-be-better-than-all"  when selecting models for ensemble~\cite{martinez2008analysis,zhou2002ensembling}. And it is generally accepted in the literature that diversity can help improve the performance of the ensemble~\cite{lecun2015deep,bian2021does,hansen1990neural}. 
Existing AutoML systems simply either include all models~(\textsc{All}) or use the best-performing model from each algorithm class~(\textsc{Best}).
% However, on one hand, both theoretical analyses and empirical studies have supported the concept that "many-could-be-better-than-all" when selecting models for an ensemble~\cite{martinez2008analysis,zhou2002ensembling}. 
However, both theory and experiments support that ``many-could-be-better-than-all" in ensemble~\cite{martinez2008analysis,zhou2002ensembling}. 
On the other hand, it is generally accepted in the literature that diversity can help improve the performance of the ensemble~\cite{bian2021does,hansen1990neural,ning2025dual}, while selecting only the best subset lacks the assurance of base model diversity.
% , which is detrimental to the ensemble performance~\cite{lecun2015deep,bian2021does,hansen1990neural}.

{\bf Issue \#2: Existing works have not yet fully explored the potential of multi-layer stacking.}
% In the multi-layer stacking of AutoGluon, the same base models are stacked layer by layer with the aim of progressively refining the quality of predictions. Ultimately, by combining the predictions from the higher-level stacker models, it is possible to achieve better performance than a single-layer stacking. However, even in "best\_quality" mode, AutoGluon’s default stacking does not exceed two layers. 
There's a curious trend that while multi-layer stacking is feasible, existing systems restrict themselves to two layers.
For example, AutoGluon's ``best\_quality" mode limits stacking to two layers.
Similarly, LightAutoML reports using two-layer only for multi-class classification, which is the only case where consistent improvements are observed; one-layer stacking is used otherwise.
H2O, meanwhile, only employs single-layer stacking.
Moreover, there is also a risk of overfitting when aggregating base models, and existing systems lack effective solutions.

{\bf Issue \#3: Existing approaches exhibit inflexible stacking strategies and lack optimization for specific tasks.} 
Several parameters will determine the behavior and performance of stacking.
For example, it involves the number of stacking layers and the choice of the blender model.
However, only AutoGluon dynamically selects the number of stacking layers based on validation scores, while other systems fix it. 
% LightAutoML uses two layers for multiclass classification and one layer for other tasks, while H2O consistently uses a single layer for all tasks. 
As for the blender model of the final layer, AutoGluon and LightAutoML adopt fixed models, while H2O provides several choices for users to specify.

% \subsection{Ensemble Pruning}
% The goal of ensemble pruning is to identify a subset of models that performs as well as, or better than,   ensembling all models. \cite{breiman2001random} demonstrated that the error of an ensemble in continuous problems can be expressed as a linear combination of individual accuracy and pairwise diversity terms. Building on this idea, \cite{zhang2006ensemble} formulated ensemble pruning as a quadratic integer programming problem, using this linear combination as the objective for optimization.
% Specifically, this method involves using a matrix to track errors of member classifiers on validation data, where diagonal elements represent individual classifier errors and off-diagonal elements indicate shared errors between classifiers. The goal is to select a subset of classifiers that minimizes the sum of the corresponding submatrix. This is formulated as a quadratic integer programming problem.
% We notice that the objective of ensembling pruning is similar to that of selecting base models for the ensemble.
% However, directly applying this method presents two issues: first, the method is limited to classification models, using the count of simultaneous errors to reflect diversity. Second, it does not consider the trade-off between individual model performance and inter-model diversity.

\begin{table}[t]
    \vspace{0mm}
    \centering
    % \caption{\small Post-hoc Ensemble Strategy of Current AutoML System.}
    
    \fontsize{9}{\baselineskip}\selectfont
    \setlength{\tabcolsep}{1mm} % Set column separation to 1mm
    \begin{tabular}{lccc}
    \toprule
        \textbf{System} & \makecell{\textbf{Base Model}} & \makecell{\textbf{Layers}} &\textbf{Blender Model}\\
    \midrule
        AutoGluon           & \textsc{All}       & Multiple & Ensemble Selection  \\ 
        LightAutoML         & \textsc{Best}                  & One/Two   & Weighted Avaraging  \\ 
        H2O                 & \textsc{All}/\textsc{Best}                  & One   & Linear/LightGBM/... \\
    \bottomrule
    \end{tabular}
    
    \caption{Stacking strategy of current AutoML systems.}
    \label{tab:automl_sys_ens}
    % \vspace{3mm}
\end{table}

\section{Method}
\label{sec:method}
% In this section, we present \sys – our proposed method for post-hoc stacking ensemble optimization.
In this section, we propose our method for post-hoc stacking ensemble optimization~(PSEO).
As shown in Figure~\ref{fig:overview_of_stacking_generation},
it first presents a \textbf{base model subset selection} algorithm with a trade-off between individual model performance and inter-model diversity. 
After that, \sys builds a deep stacking ensemble with \textbf{Dropout} and \textbf{Retain} mechanisms to fully explore the potential of stacking.
Finally, it defines a hyperparameter space for post-hoc ensemble, and uses a Bayesian optimizer to \textbf{search for the optimal ensemble} within it.
% Next, we will introduce each of the three algorithmic components in detail and explain how they address the issues present in existing methods.

% \begin{figure}[t!]
% 	\centering
% 	% \scalebox{0.95}[.95] {
% 	\includegraphics[width=\linewidth]{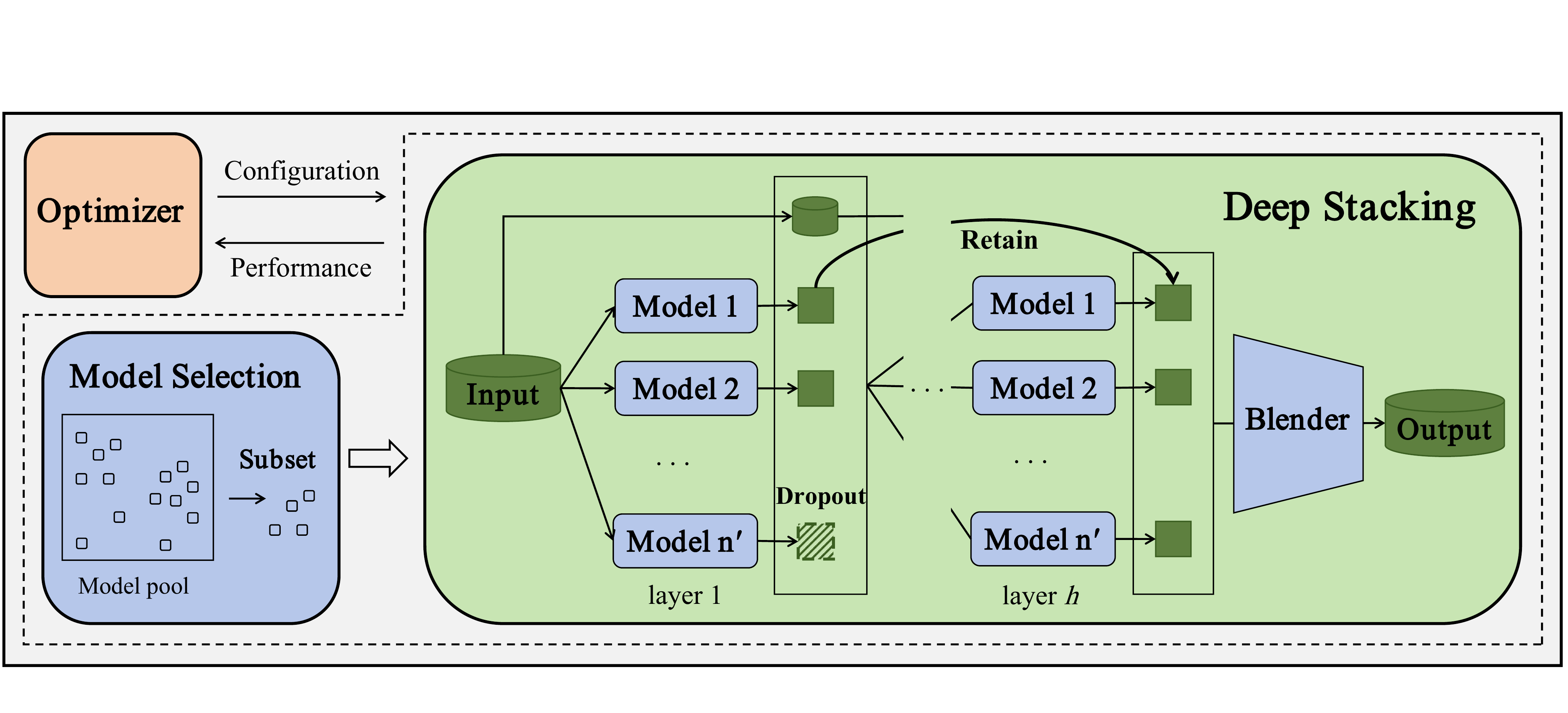}
%          % }
%         % \vspace{-2mm}
% 	\caption{Overview of \sys.}
%         \label{fig:overview_of_stacking_generation}
%         % \vspace{-1em}
% \end{figure}

\subsection{Base Model Subset Selection}
\label{sec:base_model_subset_selection}

In the scenario of post-hoc stacking, it is trivial to incorporate all available base models from the pool $\mathbb{M}$ for stacking. 
However, such a strategy lacks scalability. 
% If a large pool of candidate-based models is provided, stacking all candidate models would result in significant overhead during both ensemble training and subsequent inference phases.
When provided with a large candidate model pool, stacking all models incurs significant overhead during both training and inference.
% Therefore, systems implementing this approach typically have a very compact and narrow CASH space, such as AutoGluon.
In contrast, another approach selects only the best-performing models, which does not fully utilize the diversity of $\mathbb{M}$.
% from each model class for stacking. 
% Thus, the ensemble size equals the number of model classes in the CASH space.
% However, choosing models in this way does not fully utilize the diversity of the model pool.
% Therefore, finding the optimal base model subset for stacking is a problem that needs to be addressed.
Therefore, selecting the optimal base model subset for stacking is a meaningful problem worth investigating.

% Our goal is to select an optimal subset from a pool of candidate base models of size $n$, which is generated during the search for a single optimal model, to form an ensemble.
Nevertheless, the problem of selecting an optimal subset from a pool of candidate base models of size $n$ is a variant of the constrained knapsack problem, which is NP-hard and impractical to solve exactly.
To approximate a solution to this problem, we select a subset by focusing on two key factors: ensemble size and the trade-off between individual model performance and diversity among models.

We first use the pair-wise measures
to represent the diversity of two given models. We utilize the definition in previous research~\cite{poduval2024cash} that is supported by a theoretical framework and shows satisfactory
empirical results. 
The diversity metric between two models $(m_i, m_j)$ is:
% \begin{equation}
%     Div(f_{h_i}, f_{h_j}) = \frac{1}{|\mathcal{D}_{val}|}\sum_{(x,y) \in \mathcal{D}_{val}} \|(y - m_i(x))\odot(y - m_j(x))\|_1
% \end{equation}
\begin{equation}
    \label{eq:div}
    D(m_i, m_j) = \mathbb{E}_{(x,y) \sim \mathcal{D}_{val}} (y - m_i(x))\cdot(y - m_j(x))
\end{equation}
where for classification tasks, $m_i(x)$ is the predictive probability distribution and $y$ is the one-hot label.
For regression tasks, $m_i(x)$ is the predictive value and $y$ is the label.
% The metric emphasizes on minimizing the covariance between errors~(assuming $\mathbb{E}_{(x,y)\sim \mathcal{D}_{val}} [y-m(x)] = 0$).

%  \in \mathbb{R}^{n \times n}
After that, we can build an error covariance matrix $G$, where $G_{ij} = D(m_i, m_j)$.
% $G$ has the interesting properties that 
The diagonal entries $G_{ii}$ represent the mean square errors of model $i$ on the validation data, while the off-diagonal entries $G_{ij}$ quantify the error consistency between model $i$ and $j$. Intuitively, a smaller diagonal entry indicates a better individual model, while a smaller off-diagonal entry suggests a larger error discrepancy between two models, implying greater diversity.
Therefore, it's straightforward that a small value of $\sum_{ij}G_{ij}$ implies a good ensemble.
Furthermore, to trade off between individual model performance and inter-model diversity, we introduce a weighting coefficient $\omega$ to $G$, where the weights of diversity and individual performance are $\omega$ and $1-\omega$:
% After applying the weights, we obtain:
\begin{equation}
    \tilde{G} =  (1 - \omega) \cdot \text{diag}(G) + \omega \cdot \left( G - \text{diag}(G) \right)
\end{equation}
Finally, given $n$ candidate base models, selecting an optimal subset of size $n^{\prime} (n^{\prime} \leq n )$ can now be formulated as:
% with the weight for diversity set to $\omega$ 
\begin{equation}
\begin{aligned}
    &\min_{z} \ z^T \tilde{G} z \quad \text{s.t.} \quad \sum z_i = n^{\prime}, \quad z_i \in \{0, 1\}
\end{aligned}
\end{equation}
This binary quadratic programming (BQP) problem is NP-hard. To enable efficient approximation, we adopt a semidefinite programming (SDP) relaxation~\cite{zhang2006ensemble}. Specifically, we lift the binary variable into a positive semidefinite matrix $Z = zz^\top$, and relax the rank-one and integrality constraints. The relaxed problem becomes:
\begin{equation}
\begin{aligned}
\min_{Z} \ & \mathrm{Tr}(\tilde{G} Z) \quad
\text{s.t.} \ \ \mathrm{Tr}(Z) = n', \ Z \succeq 0, \ Z \succeq zz^T
\end{aligned}
\label{eq:sdp-relax}
\end{equation}
% \begin{equation}
%     Div(\mathbb{M}_{sub}) = \sum_{
%     \substack{m_i, m_j\in\mathbb{M}_{sub} \\  i \neq j}} Div(m_i, m_j)
% \end{equation}
% Equation~\ref {eq:sdp-relax} can be solved by standard semi-definite programming solvers. After that, a feasible binary solution can be recovered. Specifically, we select the indices corresponding to the $n'$ largest values in $\mathrm{diag}(Z)$, and set their corresponding entries in $z$ to 1, with the rest set to 0. When $z_i = 1$, the $i$th model will be selected as one of the base models for stacking.
Equation~\ref {eq:sdp-relax} can be solved by standard semi-definite programming solvers. After that, 
% a feasible binary solution can be recovered. Specifically,
we select the indices corresponding to the $n'$ largest values in $\mathrm{diag}(Z)$ and set their entries in $z$ to 1, the rest to 0. When $z_i = 1$, the $i$-th model will be selected as one of the base models for ensemble.

\begin{figure}[t!]
	\centering
	% \scalebox{0.95}[.95] {
	\includegraphics[width=\linewidth]{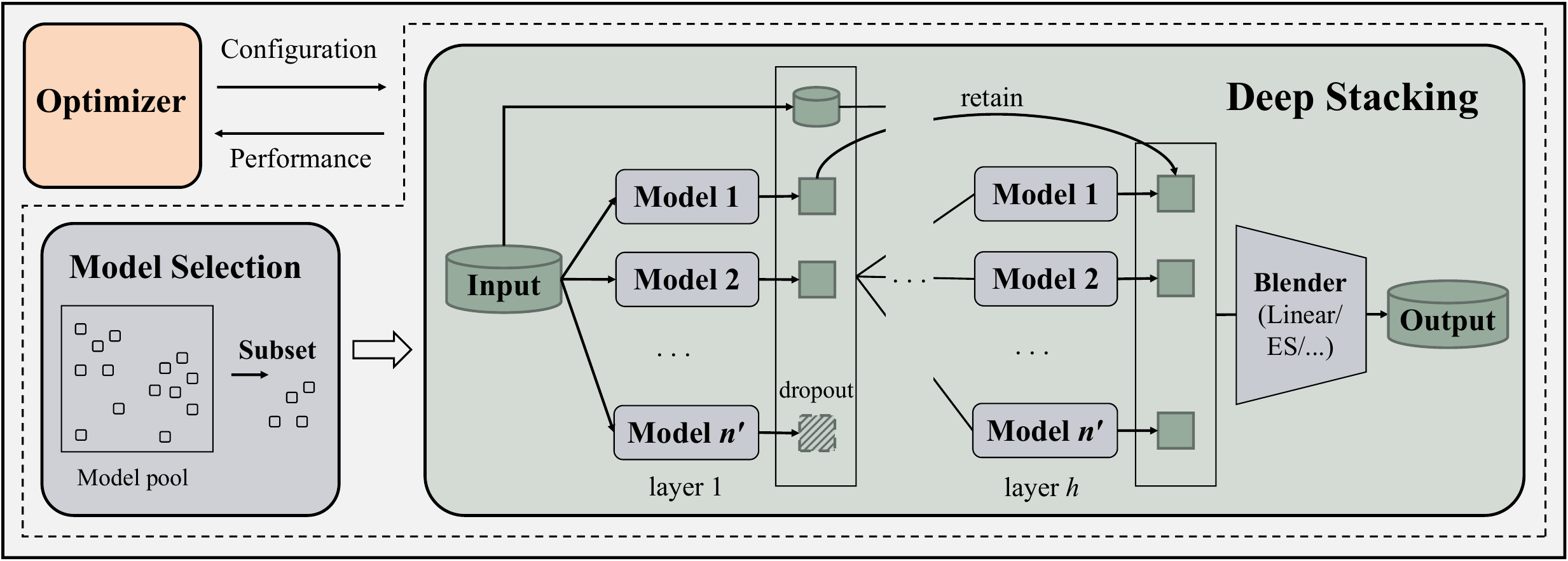}
         % }
        % \vspace{-2mm}
	\caption{Overview of \sys.}
        \label{fig:overview_of_stacking_generation}
        % \vspace{-1em}
\end{figure}

\subsection{Deep Stacking Ensemble}
\label{sec:deep_stacking_ensemble}

After selecting the base model subset, we build a multi-layer stacking ensemble like the right part of Figure~\ref{fig:overview_of_stacking_generation}.
% with a blender model to aggregate the predictive features of the last layer for output. 
% like the right part of Figure~\ref{fig:overview_of_stacking_generation}.
% The original dataset serves as the input to the first layer, whereby 
% where the outputs of the base models in the first layer are input into the next layer as predictive features, and the same process applies to all subsequent layers. In the end, a blender model will aggregate the predictive features of the last layer to generate the output. 
% Like AutoGluon, we reuse all the base models as stacker models for every layer. And stacker models take as input not only the predictions of the previous layer, but also the original data features themselves. 
Like AutoGluon, we reuse all base models as stacker models for each layer, which take inputs from the previous layer's predictive features and the original dataset.
% During the fitting process, $k$-fold cross-validation is applied to the training data to generate predictions at each layer. In each fold, the model is trained on a subset of the training data and generates predictions for the validation data. This way, each layer's stacker model is trained upon lower-layer out-of-fold (OOF) predictions, allowing the use of the entire training set for model training.
During training, we apply $k$-fold cross-validation to generate predictions for each stacker model, 
% In each fold, the model is trained on subsets and predicts on the rest data.
where each fold trains on part of the training set and predicts on the rest.
This ensures stacker models always use out-of-fold (OOF) predictions and allow full training set utilization.
During prediction, we average the outputs of $k$ cross-validation models for each stacker to generate predictions on the validation or test set. 

The above framework, referred to as deep stacking, can be viewed as a variant of a deep learning network, where each stacker model resembles a neuron.
% within a neural network layer.
However, as the stacking depth increases, it becomes more complex, which can lead to two issues: \textbf{overfitting} and \textbf{feature degradation}. 
Overfitting refers to the phenomenon where stacking performs well on the training data but fails to generalize to test data. 
Feature degradation, on the other hand, describes the progressive decline in the quality of predictive features across layers.
% Such degradation can stem from early stackers producing poor or overfitted predictions on the test set, thereby limiting the capacity of subsequent layers to extract useful information, reflecting the classic “garbage in, garbage out” effect.
% Since these issues are not addressed through structural or training methods, existing AutoML systems like AutoGluon and LightAutoML typically do not stack beyond two layers by default. 
As a result, we introduce two mechanisms, Dropout and Retain, to mitigate the aforementioned issues, aiming to fully exploit the potential of deep stacking.
The pseudo-code of the two mechanisms is provided in Appendix~\ref{app:pseudcode_of_training}.

\textbf{Dropout}. 
Consider a stacker model $S_{l,i}$ at layer $l$. % from  $n^{\prime}$ stacked models 
% If a stacker model $S_{l-1, j}$ from the previous layer performs exceptionally well on the training set, there is a high probability that the training of $S_{l, i}$ will overly rely on the output of $S_{l-1, j}$, neglecting other diverse features from the previous layer. For instance, a weighted model might assign a weight close to 1 to $S_{l-1, j}$ while setting the weights of other features close to 0.
If the predictive feature from the $j$-th stacker model $S_{l-1,j}$ of the previous layer is very close to the training set label, $S_{l-1,j}$ may become a dominating model in the training of $S_{l,i}$.
It means there is a high probability that $S_{l, i}$ will overly rely on this predictive feature while neglecting other diverse features. For instance, a weighted model might assign a weight close to 1 to the $j$-th feature while setting others close to 0. 
% If $S_{l-1, j}$'s ability does not generalize well to the test set, its poor predictions can propagate to $S_{l, i}$, 
% potentially continuing to propagate upwards and affecting the final output.
Although it might effectively decrease the training loss, it may not generalize to test samples, leading to overfitting.
% Intuitively, we need to force every stacker to rely on different stacker models from the previous layer to perform the predictions, rather than merely using the preferred one(s).

% Therefore, motivated by the dropout in neural networks\cite{srivastava2014dropout}, we introduce a dropout mechanism in deep stacking. Specifically, for every sample in the training set, the predictive features are the concatenation of all stacker model outputs from the previous layer is $\mathbf{p} = [\mathbf{p}_1, \mathbf{p}_2, \ldots, \mathbf{p}_{n^{\prime}}]$. We first build a mask vector $\mathbf{m} \in \mathbb{R}^{n^{\prime}}$, where $m_i \sim \text{Bernoulli}(1-\gamma)$ with parameter $\gamma$ representing the dropout rate.
% After that, we will drop out some stacker model outputs from the previous layer to get $\tilde{\mathbf{p}}$, where:
% \begin{equation}
%     \tilde{\mathbf{p}}_i = m_i \mathbf{p}_i + (1-m_i)\mathbf{\phi}
% \end{equation}
% The symbol $\phi$ represents the fill vector for the dropped out predictive features. We set it as the average of all the predictive features that were not dropped out:
% \begin{equation}
%     \phi = \frac{1}{\sum m_i} \sum_{i=1}^{n^{\prime}} m_i\mathbf{p}_i
% \end{equation}

% By applying dropout to the predictive features of the previous layer for each sample, we can reduce the performance variance of these features in the training data. This forces the stacker model to learn a more robust ensemble.

Therefore, motivated by the dropout in neural networks\cite{srivastava2014dropout}, we introduce the Dropout mechanisms in deep stacking.
The key insight of Dropout is to encourage the ensemble to depend on diverse base models, preventing reliance on any dominating models that show strong training set performance.
Specifically, receiving the predictive features from the previous layer, we first build a dropout mask for each predictive feature as: 
$$
    m_i \sim \text{Bernoulli}(\frac{\mathcal{L}_{\text{min}}}{\mathcal{L}_i}\gamma_0)
$$
where $\gamma_0$ is the only parameter representing the dropout rate, $\mathcal{L}_{\text{min}}$ represents the minimum loss with the training label among all predictive features from the previous layer, while $\mathcal{L}_{i}$ denotes the loss of the $i$-th predictive feature.
When $m_i$ is 1, we exclude the $i$-th predictive feature and train the current stacker model using the retained features. 
Additionally, due to the randomness introduced by the dropout process, we perform multiple repetitions to ensure robustness.
For each stacker model,  we apply dropout independently to each fold of cross-validation. 
For the blender model, we repeatedly apply dropout and training, and compute an average of multiple blender models' outputs for predicting.
To sum up, by assigning higher dropout probabilities to features closer to the labels on the training set, we offer the opportunity to reduce the dominance of any single model in the ensemble. 

\begin{theorem}
    \label{the:decrease}
    Consider a set of uncorrelated base models predictions $\mathcal{Z} = \{z_1, z_2, ..., z_{n'} \}$ in a weighted ensemble $z_{\text{ens}} = \sum_i \beta_iz_i$.
    The $i$-th prediction is dropout with rate $d_i$($d_i = \rho_i\gamma_0$ and $\rho_1 = 1 \geq \rho_i \geq 0, i\in\{1,2,\ldots,n'\}$). 
    If we perform $N$ independent random samplings of $\mathcal{Z}$ and get the average weight of each prediction,  
    as $N \to \infty$, the proportion of $z_1$'s absolute average weight in the sum of all predictions' (i.e. $|\tilde{\beta}_1| / \sum_{i=1}^{n'}|\tilde{\beta}_i|$) decreases as $\gamma_0$ increases.
\end{theorem}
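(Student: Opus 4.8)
The plan is to first identify, using the uncorrelatedness hypothesis, the exact limit of the averaged weights $\tilde\beta_i$, and then reduce the claim to elementary monotonicity of a ratio of affine functions of $\gamma_0$. I would fix the weight-fitting rule to be least squares, the natural reading of a ``weighted ensemble,'' and exploit the fact that the $z_i$ are mutually uncorrelated: after centering (or with an intercept) this makes the Gram matrix $Z^\top Z$ diagonal. The payoff is that the least-squares weight of any retained prediction decouples from the rest --- whenever prediction $i$ survives a dropout sample, its fitted weight is the fixed value $\beta_i^\star = \langle z_i, y\rangle / \langle z_i, z_i\rangle$, independent of which other predictions happen to be retained. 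This is the heart of the argument and the step that carries all the difficulty: the uncorrelated assumption is precisely what guarantees subset-independence of each retained weight, and without it the per-sample weight of $i$ would depend on the surviving subset, breaking the clean averaging below.

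Next I would pass to the limit $N\to\infty$. In each independent sample, prediction $i$ is retained with probability $1-d_i = 1-\rho_i\gamma_0$, contributing weight $\beta_i^\star$, and contributes $0$ otherwise; by the strong law of large numbers the sample weights average to their expectation, so $\tilde\beta_i = (1-\rho_i\gamma_0)\,\beta_i^\star$ almost surely. Since $\gamma_0\in[0,1]$ and $\rho_i\le 1$, the factor $1-\rho_i\gamma_0$ is nonnegative and absolute values factor through it. Writing $a_i = |\beta_i^\star|\ge 0$ and using $\rho_1=1$, the target quantity becomes the explicit function
\[
P(\gamma_0) = \frac{(1-\gamma_0)\,a_1}{\sum_{i=1}^{n'} (1-\rho_i\gamma_0)\,a_i}.
\]

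Finally I would establish $P'\le 0$ by differentiating this quotient. Denoting the numerator by $f$ and the denominator by $g$, a direct computation shows that $f'g - fg'$ collapses --- the $(1-\gamma_0)a_1^2$ contributions cancel --- to $a_1\sum_{i=2}^{n'} a_i(\rho_i-1)$. Because $\rho_i\le 1=\rho_1$ for every $i$ and all $a_i\ge 0$, this sum is $\le 0$, and since $g>0$ we obtain $P'(\gamma_0)\le 0$; hence the proportion is non-increasing in $\gamma_0$, as claimed. The only real obstacle is the first step --- making the decoupling rigorous and stating precisely in what sense ``uncorrelated'' yields subset-independence of the weights; once that is secured, the remaining two steps are a routine law-of-large-numbers limit and a one-line sign check on the derivative.
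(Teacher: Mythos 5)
Your proposal is correct and follows essentially the same route as the paper: orthogonality makes each retained least-squares weight subset-independent, the strong law of large numbers gives $\tilde\beta_i=(1-\rho_i\gamma_0)\beta_i^\star$, and the claim reduces to monotonicity of a ratio of affine functions of $\gamma_0$. The only cosmetic difference is that you differentiate the quotient directly while the paper divides numerator and denominator by $p_1=1-\gamma_0$ and checks that each term $p_i/p_1$ is non-decreasing; both yield the same sign condition $\sum_{i\ge 2}a_i(\rho_i-1)\le 0$.
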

We give a proof of Theorem~\ref{the:decrease} in Appendix~\ref{app:proof}, which shows that with sufficient sampling, averaging results from multiple training with Dropout can reduce the weight proportion of the predictive feature with the highest dropout probability in the weighted ensemble.
In our scenario, the contribution of the feature with the lowest training loss will be reduced.
Although the feature with the lowest training loss may not always be the dominating model, they are aligned with high probability. Even if not, as shown in Section~\ref{sec:exp_of_dropout_and_retain}, Dropout may still reduce the weight of the dominating model, as it tends to have lower training loss and thus a higher dropout probability compared to others.
% This indicates that the Dropout mechanism offers an opportunity to reduce the dominance of models with strong performance on the training set, forcing the ensemble to utilize more base models and thereby alleviating overfitting.
% In addition, although some stacking models or blender models naturally possess methods to mitigate overfitting (e.g., linear models), we provide a general mechanism from the structural level of the stacking framework to alleviate overfitting. This can complement existing methods and play a more critical role for models that lack such methods (e.g., KNN).

\textbf{Retain}. 
% Consider the following scenario: if a stacker model $S_{l-1,j}$ suffers from severe overfitting, and a subsequent model $S_{l,i}$ fails to effectively handle this issue and instead relies heavily on the predictive features generated by $S_{l-1,j}$, the overfitting may propagate forward. Alternatively, if $S_{l-1,j}$ is poorly trained and produces low-quality predictive features, the subsequent stacker models may be adversely affected by these degraded features, leading to insufficient training. Both cases can result in a progressive deterioration of predictive features across layers when evaluated on out-of-sample data (e.g., the test set), ultimately compromising the final output. We refer to this phenomenon as feature degradation in deep stacking.
% Consider the following scenario:
When a stacker model $S_{l,i}$ generates low-quality predictive features on out-of-sample data—due to overfitting or poor training—and subsequent models fail to correct or compensate for these degraded features, errors can propagate across layers. This leads to a progressive decline in out-of-sample performance across layers, ultimately compromising the final output. 
We refer to this phenomenon as feature degradation in deep stacking.

To address this issue, we introduce the Retain mechanism.  The insight of Retain to prevent the continued propagation of feature degradation after it occurs at a certain stacker model.
Specifically, if Retain is enabled, it compares each stacker model’s predictive loss on the validation set with that of its counterpart from the previous layer~(i.e., the stacker model at the same position and with the same configuration).
If the current model underperforms, its output is replaced with that generated by its predecessor; otherwise, its output is retained.
In this way, we offer the potential for the predictive features to progressively converge toward better representations across layers on data outside the training set.

% To highlight the effect of predictive feature quality on ensemble performance, Figure~\ref{fig:avg_rank_of_output} uses a linear model as the blender and shows the average test ranking of the final stacking ensemble with and without the retain mechanism, as stacking depth increases. The results indicate that the retain mechanism performs increasingly better, widening the gap over the baseline.
% As a result, by using validation performance to decide whether to retain the output of a stacker model, we offer the potential for the predictive features to progressively converge toward better representations across layers on data outside the training set.

% \textbf{Training with Dropout and Retain}. 

\subsection{Post-hoc Stacking Ensemble Optimization}
Overall, post-hoc stacking can be divided into two stages: selecting a subset of base models and constructing a specific stacking ensemble based on the subset.
Building upon the preceding design, we notice that several hyperparameters will determine the behavior of post-hoc stacking:
1) ensemble size and 2) diversity weight determine the size of the selected base model subset and the trade-off between individual performance and diversity within the subset.
During the stacking stage, 3) the number of stacking layers and 4) the choice of blender model define the stacking structure; while 5) ``retain or not" and 6) dropout rate control the training mechanisms to overcome the issues of overfitting and feature degradation. 
Therefore, we define the hyperparameter space of post-hoc stacking optimization as the combination of the above six hyperparameters.
% We can define the hyperparameter list of stacking optimization as follows:
% \begin{equation}
% \begin{array}{rl}
% \mathcal{X} = \left\{
% \begin{array}{l}
% \text{ensemble size} \\
% \text{diversity weight} \\
% \text{number of layers} \\
% \text{blender model} \\
% \text{dropout rate} \\
% \text{retain or not}
% \end{array}
% \right.
% &
% \begin{tikzpicture}[baseline=(current bounding box.center), xshift=-10.5cm]
%     \node (a) at (-1.2,1.1) {};
%     \node (b) at (-1.2,0.7) {};
%     \node (c) at (-1.2,0.3) {};
%     \node (d) at (-1.2,-0.1) {};
%     \node (e) at (-1.2,-0.5) {};
%     \node (f) at (-1.2,-1.1) {};
%     % 第一组：1-2行
%     \draw [decorate,decoration={brace,amplitude=6pt}] (a.north) -- (b.south) node [midway,right,xshift=6pt] {subset selection};
%     % 第二组：3-5行
%     \draw [decorate,decoration={brace,amplitude=6pt}] (c.north) -- (f.north) node [midway,right,xshift=6pt] {deep stacking};
% \end{tikzpicture}
% \end{array}
% % \vspace{-0.5em}
% \end{equation}
% The maximum number of layers and the maximum ensemble size can be configured by the user based on their computational resources. \sys includes a set of commonly used blender models in the search space. However, additional primitives can be added by the user as needed.
In order to identify the optimal post-hoc stacking strategy over the hyperparameter space, we employ the state-of-the-art black-box optimization technique, Bayesian Optimization (BO)~\cite{snoek2012practical,hutter2011sequential}.
First, \sys will collect the predictions of all models in the candidate pool on the validation set. 
% Then at each iteration, it employs Bayesian Optimization to suggest a new ensemble configuration by maximizing the acquisition function over the hyperparameter space. Based on the selected configuration, it constructs a stacking ensemble model on the training set and evaluates its performance on the validation set.
At each iteration, it uses BO to suggest a new ensemble configuration. 
It then constructs a corresponding post-hoc ensemble on the training set and evaluates its performance on the validation set.
After exhausting the budget, the best ensemble configuration found in the observations is returned.
Appendix~\ref{app:pseudcode_of_sys} shows the pseudo-code of \sys and further details on the disk storage and reuse strategies to reduce computation.

% Moreover, before BO gets sufficient observations to train
% the surrogate model, we sample several configurations for initialization. with the Latin Hypercube Sampling (LHS)~\cite{mckay2000comparison}.

\subsection{Discussions}

% In this section, we provide a discussion on \sys as follows:

\textbf{Extension.} 
As a post-hoc ensemble framework, \sys operates independently of the tuning algorithm used in the single best model searching stage, and can be applied to any AutoML system that produces a candidate model pool. % or an optimization trajectory.
% Moreover, the term model in our context refers to a generalized concept, which can represent a complete machine learning pipeline, including preprocessing, feature engineering, and the learning algorithm itself.
% \sys is also independent of the task type and metrics.
\sys also provides a general ensemble framework, within which other ensemble methods (e.g., ensemble selection) can be integrated as options of blender models.

% \textbf{Time Complexity.}
% The time complexity of each iteration to suggest a new ensemble configuration is $\mathcal{O}(|D|log|D|)$, which is dominated by the cost of fitting a probabilistic random forest surrogate.
% Through solving the SDP problem with the "Clarabel" solver, the time complexity to select the base model subset is $\mathcal{O}(|\mathbb{M}|^2)$.

\textbf{Overhead Analysis.}
Suppose $T_f$ is the average cost of fitting a stacker model on one fold, $T_{p_1}$ and $T_{p_2}$ are the average cost of a stacker model to predict on one fold of the training set and the validation set, respectively.
During each iteration, the computation cost of training and validating a stacking is $khn'(T_f+T_{p_1}+T_{p_2})$, where $k$ is the fold number of cross-validation, $h$ is the number of layers, and $n'$ is the ensemble size.
In practice, the training and validation of $kn'$ stacker models in each layer can be executed in parallel, which can significantly reduces the computational cost. 
% Moreover, by storing previously constructed stacking ensembles on disk, \sys enables partial reuse, thereby further reducing the overhead.
% In scenarios where computational resources are limited or the dataset is large, users may reduce the number of stacking layers and the upper bound of ensemble size. In more extreme cases, the stacking can be restricted to a single layer, and cross-validation can be omitted by training the blender model directly on the validation set, thereby avoiding any additional training overhead of stacker models.

\textbf{Difference with Previous Methods.}
Methods such as Auto-WEKA~\cite{thornton2013auto} and Autostacker~\cite{chen2018autostacker} also conduct a searching of ensemble.
We discuss the difference between \sys and previous methods as follows:
\textbf{D1}. The first difference lies in the search space.
Auto-WEKA and Autostacker adopt an expanded search space covering both ensemble strategies and base model configurations.
In contrast, \sys adopts the post-hoc ensemble, where the search space covers base model selection and ensemble methods.
\textbf{D2}. The main difference is in how the ensemble search space is explored.
% Auto-WEKA and Autostacker directly search over the space whose dimensionality is proportional to the maximum ensemble size, causing an exponential growth in configurations. 
Auto-WEKA and Autostacker perform a joint search over both base models and their ensemble, leading to a combinatorial search space whose size grows exponentially with the maximum ensemble size.
In contrast, \sys represents base model subset selection using two key hyperparameters, allowing for only a linear increase in the number of candidate configurations.
% making the dimensionality independent of the maximum ensemble size, and leading to a linear increase in the number of candidate configurations.
% Methods such as Auto-WEKA~\cite{thornton2013auto} and Autostacker~\cite{chen2018autostacker} also adopt an ensemble learning perspective.
% The key distinction between \sys and these methods lies in how the ensemble is constructed. 
% These approaches adopt the one-stage ensemble, directly searching for an ensemble over an expanded search space that includes both the ensemble strategy and the hyperparameters of each base model. 
% As a result, if the maximum number of base models is $N$, the dimensionality of the search space becomes $N$ times that of a single model’s hyperparameter space, and the number of possible configurations grows exponentially.
% In contrast, \sys adopts the post-hoc ensemble, building a post-hoc stacking ensemble search space and searching for the best stacking ensemble based on a pool of base models obtained from prior single-model optimization.
% In this way, the dimensionality of the ensemble search space is independent of the ensemble size, and the total number of ensemble configurations increases only linearly with $N$.

\section{Experiment}
\label{sec:experiment}
\subsection{Experiment Setup}
\label{sec:experiment_setup}

\begin{table}[t]
    % \vspace{0mm}
    \centering
    
    \begin{tabular}{l|cc}
    \toprule
        \textbf{Hyperparameter} & \textbf{Type} &\textbf{Range}\\
    \midrule
         Ensemble size & Int & [5, 50] \\
         Diversity weight & Float & [0, 0.5] \\
        \hline
        Number of layers & Int & [1, 5] \\
         Blender model & Categorical & \{ES, Linear, LGB\} \\
         Dropout rate & Float & [0, 0.4] \\
         Retain or not & Categorical & \{True, False\} \\
    \bottomrule
    \end{tabular}
    \caption{Ensemble hyperparameter space of \sys.}
    \label{tab:search_space}
    % \vspace{-1em}
\end{table}

\textbf{Baselines.}
We compare the proposed \sys with 15 baselines. 
1) The best model according to the validation metric~(\textbf{Single-Best}); 
--- \textit{Three one-step ensemble learning methods}: 
2) Ensemble optimization (\textbf{EO})~\cite{levesque2016bayesian} that maintains and updates an ensemble pool greedily; 
3) \textbf{Autostacker}~\cite{chen2018autostacker} that uses an evolutionary strategy to jointly search for the stacking structure and base model hyperparameters; 
4) \textbf{OptDivBO}~\cite{poduval2024cash} that maintains an ensemble selection and considers diversity while selecting the next model to train;
--- \textit{Two post-hoc ensemble designs based on ensemble selection}: 
5) Ensemble selection~(\textbf{ES})~\cite{caruana2004ensemble};
6) \textbf{CMA-ES}~\cite{purucker2023cma} that uses an evolutionary strategy to search for an optimal ensemble weight;
--- \textit{Nine post-hoc ensemble designs based on stacking}:
7)-14) Based on the post-hoc stacking strategies adopted by existing AutoML systems, we consider three key hyperparameters. 
First, for base model selection, we adopt two commonly used strategies: (a) selecting all available models (\textsc{All}), and (b) selecting the best-performing model from each algorithm class (\textsc{Best}). 
Second, we vary the number of stacking layers, considering both one-layer and two-layer configurations. %, as most existing AutoML systems typically limit stacking depth to no more than two layers by default. 
Third, for the final blender model, we use two widely adopted approaches: ensemble selection (ES) and linear regression (Linear). 
By combining these choices, we obtain eight commonly used stacking strategies in existing AutoML systems~(e.g., \textbf{\textsc{All}-Linear-L2} stacks all base models for two layers, and uses a linear blender for output);
15) We extracted AutoGluon's ``best\_quality" ensemble strategy~(\textbf{AutoGluon-}), which stacks all candidate models with up to two layers. The final number of layers is determined based on validation scores. If the time budget is limited, time is evenly allocated across layers, with models trained sequentially based on their validation scores.

\textbf{Datasets and metrics.}
To compare \sys with baselines, we use 80 real-world ML datasets from the OpenML repository~\cite{vanschoren2014openml}, 50 for classification tasks (CLS) and 30 for regression tasks (REG). The number of samples for CLS ranges from 1k to 110k, and that for REG ranges from 0.6k to 166k.
Further dataset details can be found in Appendix~\ref{app:dataset_details}.
We report the test error for all classification tasks to measure misclassification rates, which equals $1-accuracy$. While for regression tasks, we use $mean\ squared\ error$~(MSE) to assess prediction error.

\textbf{Construction of the base model pool.}
For all post-hoc ensemble methods, we construct the candidate base model pool $\mathbb{M}$ using the AutoML system VolcanoML~\cite{li2023volcanoml}. It defines a CASH search space encompassing algorithmic choices and feature engineering hyperparameters, which contains 100 hyperparameters in total~(see Appendix~\ref{app:search_space_of_cash} for details). We run the Bayesian optimization method of VolcanoML over this space and collect all evaluated models to form $\mathbb{M}$.
In each iteration, models are fitted on the training set and evaluated on the validation set. All the fitted models are saved to disk for ensemble utilization.

\textbf{Implementation Details}.
Each dataset undergoes a split into three distinct sets: training (60\%), validation (20\%), and test (20\%).
Table~\ref{tab:search_space} shows the specific hyperparameter space of \sys. The value ranges of the parameters can be adjusted based on the available budget or user preferences.
% The number of folds for cross-validation during stacking training is set to 5, and we set the number of repetitions for dropout training of the blender model the same as the number of folds.
We used 5-fold cross-validation for stacking training and performed dropout training of the blender model with the same number of repetitions.
More implementation details for other baselines and the training process are provided in Appendix~\ref{app:implementation_details}.

For all post-hoc ensemble approaches, 3600 seconds are dedicated to constructing the candidate base model pool $\mathbb{M}$, yielding an average of 437 base models per task.
For the baseline using a fixed post-hoc ensemble strategy, we execute it to completion. %  and record the total time cost
For methods that perform additional optimization during the post-hoc ensemble phase~(CMA-ES, AutoGluon-, and \sys), we allocate an extra 3600 seconds for ensemble-level optimization.
For the three ensemble learning methods, we impose a total budget of 7200 seconds, due to the adoption of a one-step optimization strategy rather than a two-step post-hoc ensemble.

\subsection{Effectiveness of Base Model Subset Selection}
\label{exp:effectiveness_of_base_model_selection}
We first conduct an experiment to validate the effectiveness of \sys's base model subset selection algorithm.
To investigate the impact of different ensemble size $n'$ and diversity weight $\omega$ on the performance of stacking, we conduct a grid search over their feasible ranges. Specifically, $n'$ is varied from 10 to 50 in increments of 10, 
% i.e., $\{10, 20, 30, 40, 50\}$, 
while $\omega$ is varied from 0 to 0.5 with a step size of 0.1.
% i.e., $\{0.0, 0.1, 0.2, 0.3, 0.4, 0.5\}$.
For each combination of $n'$ and $\omega$, we generate a corresponding subset of base models and apply a one-layer stacking with Linear as the blender model. 
Then, the optimal combination~(\textsc{Opt}) is selected based on validation performance and evaluated on the test set.
We compare its test errors with those of 30 fixed hyperparameter combinations. 
Furthermore, we include comparisons with strategies commonly employed in AutoML systems, namely: \textsc{All} and \textsc{Best}.
% selecting all available models (\textsc{All}), and selecting the best-performing model from each algorithm class (\textsc{Best}).
Figure~\ref{fig:size_ratio_rank} presents the average test ranks of stacking ensembles constructed using base models selected by the different strategies. Each region represents a method, with color intensity and the number indicating the average rank (lighter is better).
The $5 \times 6$ grid in the bottom-left corresponds to a grid search over $n'$ and $\omega$. The surrounding L-shaped region marks the optimal combinations selected on the validation set~(\textsc{Opt}). The two blocks on the far right represent the \textsc{Best} and \textsc{All} strategies.

We derive three observations from the results:
1) Ensembles based on low-diversity models show inferior performance. For instance, all the $\omega=0$ strategies and \textsc{Best} yield average ranks between 17.8 and 20.4.
2) \sys's base model selection mostly outperforms \textsc{All} and \textsc{Best}, with 18 and 29 wins out of 30 configurations, respectively.
% Moreover, because the average ensemble size of \textsc{All} is 437, even the largest ensemble of \sys achieves better performance with only 1/9 of the stacking training time.
3) Tuning $n'$ and $\omega$ leads to significant improvements,  
% This is evidenced by the fact that \textsc{Opt} achieves a rank of 8.8 vs.\ 12.3 for the second-best baseline.
 with \textsc{Opt} achieving a rank of 8.8 vs.\ 12.3 for the second-best baseline.
The superiority of \textsc{Opt} stems from two factors: 1) it balances individual model performance and inter-model diversity with diversity weights, as further demonstrated in Appendix~\ref{app:exp_trade-off}. 2) it optimizes the ensemble size and diversity weight to achieve a task-specific base model selection strategy.

\begin{figure}[t!]
	\centering
	% \scalebox{0.95}[.95] {
	\includegraphics[width=0.9\linewidth]{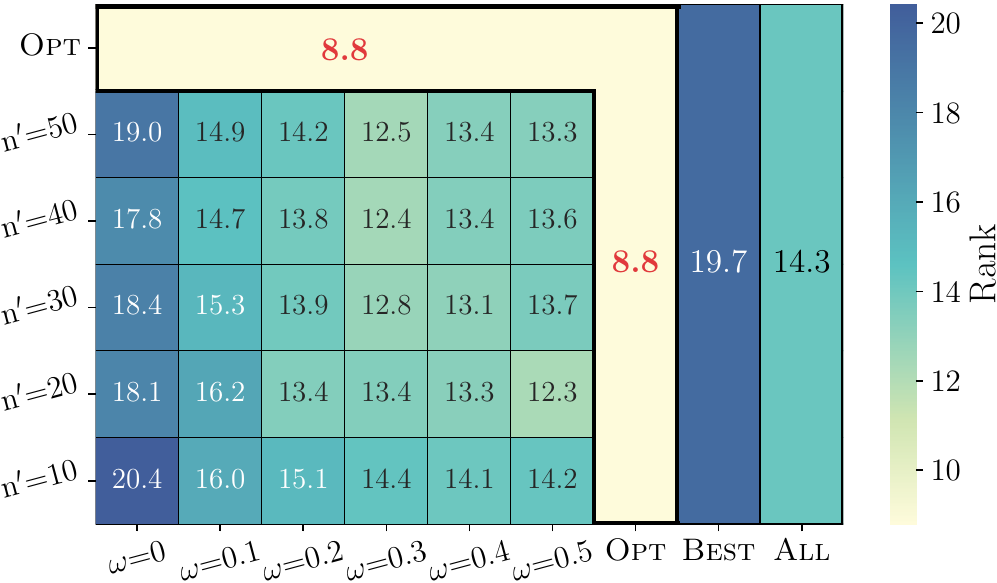}
         % }
        % \vspace{-2em}
	\caption{Average test ranks of stacking with different base model subset selection methods across 80 datasets.}
        \label{fig:size_ratio_rank}
        % \vspace{-1em}
\end{figure}

\begin{figure}[tb]
% % \vspace{-3mm}
\centering

\subfigure[Proportion of max weight.]{
\includegraphics[width=0.47\linewidth]{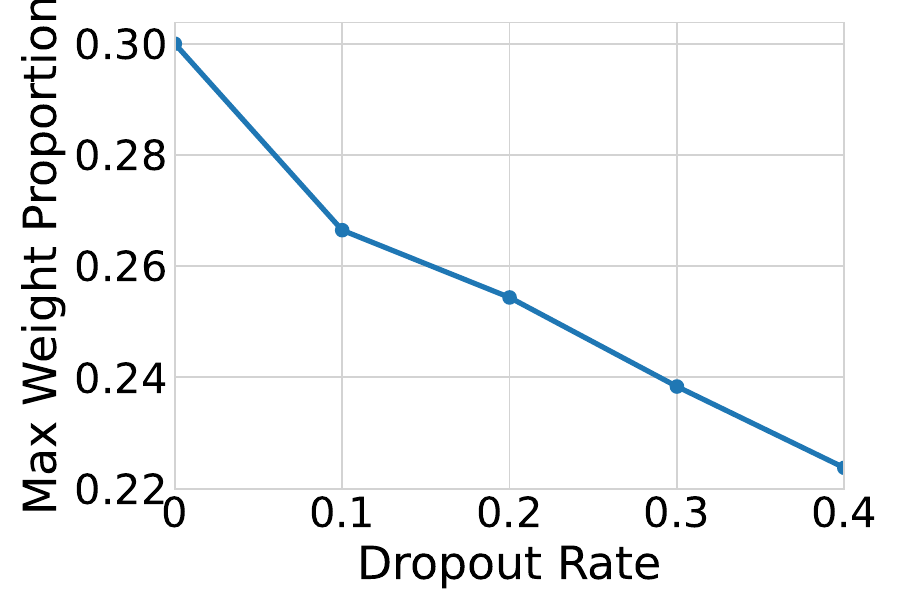}
\label{fig:maximum_weight}
}
\subfigure[Improvement of stackers.]{
\includegraphics[width=0.475\linewidth]{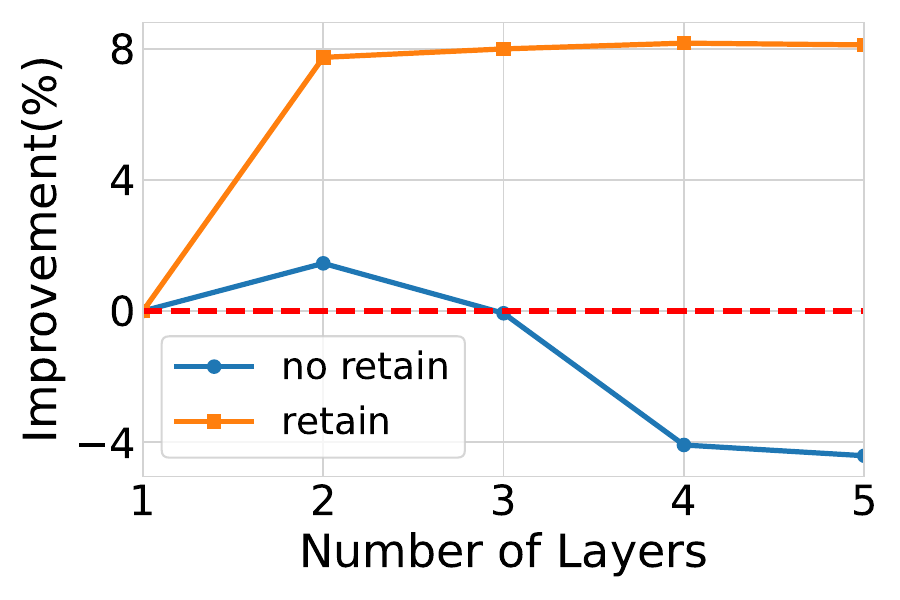}
\label{fig:improvement_of_stackers}
}
% \subfigure[Average rank of output.]{
% \label{fig:avg_rank_of_output}
% \includegraphics[width=0.46\linewidth]{images/rank_with_layers.pdf}
% }

% \vspace{-3mm}
% \caption{\small Performance of stackers and outputs with(out) retain.}
\caption{Effectiveness of Dropout and Retain mechanisms.}
\label{fig:effectiveness_of_dropout_and_retain}
% \vspace{-1em}
\end{figure}

\subsection{Effectiveness of Dropout and Retain}
\label{sec:exp_of_dropout_and_retain}
% To demonstrate the flexibility and potential of \sys's stacking framework, 
This section evaluates the impact of Dropout and Retain mechanisms. Here, we select 30 base models with a diversity weight of 0.3 and use ES as the blender model.

In Figure~\ref{fig:maximum_weight}, for each dropout rate $\gamma_0$ ranging from 0 to 0.4, we trained the ES five times, averaged the weights, and reported the average proportion of the largest base model weight across 80 tasks.
We can observe that as $\gamma_0$ increases, the largest weight shows a decreasing trend.
Appendix~\ref{app:exp_effectiveness_of_dropout} further shows that as the dropout rate increases, the gap between the training and test errors gradually narrows, indicating a reduction in overfitting.
In Figure~\ref{fig:improvement_of_stackers}, we compute the average test error of stacker models’ predictive features at each layer, and report each layer’s average improvement rate relative to the first layer across 80 tasks.
It is observed that without the Retain mechanism, feature quality peaks at the second layer and subsequently deteriorates.
% stacking two layers leads to an average improvement in predictive features. However, beyond the second layer, feature quality drops below that of the first layer and continues to degrade.
In contrast, with the Retain mechanism, the predictive features show a more significant improvement, and no obvious degradation is observed as the number of layers increases.
Appendix~\ref{app:exp_effectiveness_of_retain} provides further analysis of how predictive feature quality affects ensemble performance, concluding that in higher-level stacking, Retain significantly benefits the ensemble.

\begin{table*}[t]
    % \vspace{0mm}
    \centering
    % \fontsize{9}{11}\selectfont
    \setlength{\tabcolsep}{1.8mm} % Set column separation to 1mm
    % \caption{Average rank and average total cost across different datasets.}

    \begin{subtable}
        \centering
        \begin{tabular}{lccccccccc}
        \toprule
             & Single-Best & EO & Autostacker & OptDivBO & ES & CMA-ES & \textsc{All}-ES-L1 & \textsc{All}-ES-L2 & \textsc{All}-Linear-L1 \\
        \midrule
% CLS & 9.32 & 9.02 & 10.76 & 7.42 & 7.20 & 7.98 & 7.30 & 5.80 & \underline{5.68} \\
% REG & 13.03 & 12.57 & 10.07 & 7.43 & 8.53 & 9.17 & 8.13 & 6.93 & 7.50 \\
% ALL & 10.71 & 10.35 & 10.50 & 7.42 & 7.70 & 8.43 & 7.61 & 6.22 & 6.36 \\
CLS & 9.44 & 9.12 & 9.46 & 7.10 & 7.36 & 8.14 & 7.44 & 5.86 & \underline{5.72} \\
REG & 13.03 & 12.43 & 10.07 & 8.33 & 8.47 & 9.17 & 8.07 & 6.90 & 7.43 \\
ALL & 10.79 & 10.36 & 9.69 & 7.56 & 7.78 & 8.53 & 7.67 & 6.25 & 6.36 \\
        % \hline
        % \addlinespace[1mm]
        % Cost & 3600 & 7200 & 7200 & 7200 & 3601 & 7200 & 7606 & 11972 & 7683 \\
        \bottomrule
        \end{tabular}
    \end{subtable}

    % \vspace{1mm} % 两个表格之间的间距
    
    \begin{subtable}
        \centering
        \begin{tabular}{lccccccc}
        \toprule
              & \textsc{All}-Linear-L2 & \textsc{Best}-ES-L1 & \textsc{Best}-ES-L2 & \textsc{Best}-Linear-L1 & \textsc{Best}-Linear-L2 & AutoGluon- & \sys \\
        \midrule
% CLS & 7.28 & 7.30 & 7.10 & 8.12 & 6.54 & 6.06 & \textbf{2.58} \\
% REG & 7.30 & 9.80 & 7.77 & 9.43 & 7.50 & \underline{6.20} & \textbf{3.67} \\
% ALL & 7.29 & 8.24 & 7.35 & 8.61 & 6.90 & \underline{6.11} & \textbf{2.99} \\
CLS & 7.36 & 7.50 & 7.28 & 8.26 & 6.74 & 6.18 & \textbf{2.56} \\
REG & 7.23 & 9.73 & 7.80 & 9.40 & 7.43 & \underline{6.20} & \textbf{3.63} \\
ALL & 7.31 & 8.34 & 7.47 & 8.69 & 7.00 & \underline{6.19} & \textbf{2.96} \\
        % Champion  &   6 &    7   &    6   &   5  &   7  & 11& 30 \\
        % \hline
        % \addlinespace[1mm]
        % Cost & 12009 & 3703 & 3837 & 3712 & 3871 & 7200 & 7200 \\
        \bottomrule
        \end{tabular}
    \end{subtable}
  
    \caption{Average test rank across 50 classification tasks (CLS), 30 regression tasks (REG), and all tasks (ALL) for 16 methods. A smaller rank is better. The best methods are shown in bold, while the second-best methods are underlined.}
    % The cost row shows the average total time each method spent solving the CASH tasks, measured in seconds.
    \label{tab:max_exp}
\end{table*}

\subsection{Comparison with Baselines}
\label{sec:main_exp}

This section compares \sys with state-of-the-art baselines on 80 real-world CASH problems.
Table~\ref{tab:max_exp} shows the average test rank across different datasets, from which we can get five observations:
1) Ensemble indeed helps improve the performance of CASH results. 
% The average rank of the single best model selected by the validation set~(Single-Best) is only 10.79, the lowest among all methods.
The average rank of Single-Best is only 10.79, the lowest among all methods.
2) Among the ensemble learning methods, direct search for an ensemble~(Autostacker, EO) is ineffective. 
This is because the huge search space, covering both ensemble and model hyperparameters, is challenging to optimize with Autostacker's evolutionary algorithm or EO's greedy update strategy.
OptDivBO achieves a rank of 7.56, outperforming Autostacker~(9.69) and EO~(10.36), as it introduces ensemble selection to update the ensemble pool, thereby avoiding EO's instability caused by including poor models during optimization.
% ES is a strong baseline, achieving a rank of 7.78 with minimal cost among all ensemble methods. 
% , requiring only 1 extra minute for post-hoc ensemble after 3600 seconds of base model collection
% CMA-ES may lead to overfitting through genetic search, resulting in a lower rank~(8.53) than ES~(7.78).
3) Among all post-hoc ensemble methods, stacking shows great potential. AutoGluon-, \textsc{All}-ES-L2, and \textsc{All}-Linear-L1 achieve ranks between 6.19 and 6.36, higher than ES~(7.78) and CMA-ES~(8.53). 
This is because ES relies on a fixed greedy strategy, which inherently limits its upper bound, and CMA-ES may lead to overfitting. 
4) We observe that stacking all models~(\textsc{All}-*-*) outperforms stacking the best ones~(\textsc{Best}-*-*), as it fully leverages the diversity in $\mathbb{M}$. However, stacking all models incurs substantial costs, with single-layer stacking taking over an hour on average and multi-layer stacking becoming even more unacceptable.
5) Among all methods, \sys significantly outperforms the others. While the second-best baseline achieves a rank of 6.19, the rank of \sys is 2.96. 
% Among the 8 fixed-strategy stacking ensemble baselines, either the cost is too high or the stacking performance is poor.
% AutoGluon-, which optimizes the number of stacking layers, outperforms the 8 fixed-strategy stacking methods and achieves the second rank~(6.19) across 80 datasets.
The specific loss per dataset per method is shown in Appendix~\ref{app:exp_performance_perdataset_permethod}.

We attribute the superiority of \sys to three key factors:
1) As demonstrated in Section~\ref{exp:effectiveness_of_base_model_selection}, it provides a customized base model selection for different tasks.
2) Dropout and Retain mechanisms mitigate the issues of overfitting and feature degradation, enhancing the potential of deep stacking ensembles. We  conduct ablation studies of them in Appendix~\ref{app:exp_ablation}.
3) Through optimization, \sys builds task-specific stacking structures. 
As shown in Appendix~\ref{app:exp_effectiveness_of_struc}, 
optimizing stacking hyperparameters yields significant benefits.
% We analyze the optimal configurations in Appendix~\ref{app:exp_optimal_conf_ana}, aiming to provide insights for designing post-hoc ensembles tailored to different tasks.

\begin{figure}[t!]
	\centering
	% \scalebox{0.95}[.95] {
	\includegraphics[width=\linewidth]{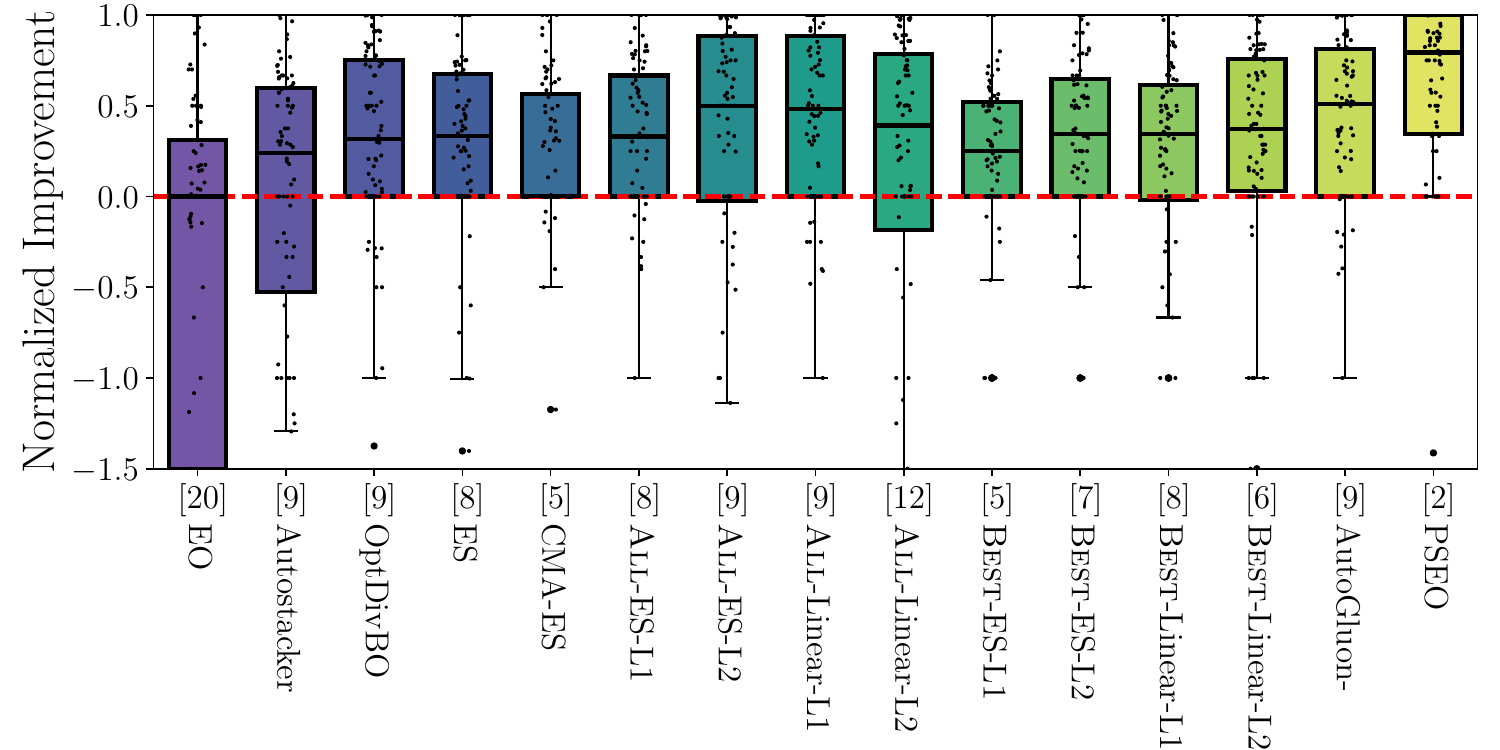}
         % }
        % \vspace{-2em}
	\caption{Normalised Improvement Boxplots: Higher is better. Each dot represents a dataset. The number in square brackets counts the outliers that are not shown in the plot.}
        \label{fig:main_exp}
        % \vspace{-1em}
\end{figure}

\textbf{Normalised Improvement.} 
To further investigate our results, we use boxplots of the normalized improvement to visualize the distribution of relative performance for all methods across 80 datasets in Figure~\ref{fig:main_exp}.
Specifically, for each dataset, 1 corresponds to the minimum test loss achieved among all methods on it, and 0 corresponds to the test loss of Single-Best. 
If Single-Best is the best, we penalize the relative performance of worse methods to -10. % set everything as good as Single-Best to 0 and 
We can conclude that \sys has better relative performance distributions than all the baselines (see the medians and whiskers).

% \textbf{Optimal Configuration Analysis.}
% We aim to explore the distinctive features of the optimal ensemble configurations obtained by \sys across various tasks. Our analysis reveals five key characteristics:
% 1) For base model subset selection, a higher diversity weight is generally more effective. 
% % A large ensemble size combined with a small diversity weight is typically not optimal.
% 2) Regarding the number of stacking layers, two layers are typically the best choice, especially for regression models. Scenarios with more or fewer layers are rare for regression. 
% However, classification models show potential with additional layers.
% 3) For smaller datasets, a larger dropout rate may be necessary, as overfitting is more likely to occur.
% 4) For blender models, ES and Linear are typically more optimal choices than LGB. ES performs particularly well on smaller datasets, while Linear tends to improve as the dataset size increases.
% 5) The Retain mechanism is recommended to be enabled, which is a necessary condition for achieving the best performance in higher-level stacking.
% % When the optimal number of stacking layers exceeds one, the value of retain or not is almost always True.
% Appendix B.6 presents the detailed experimental analysis process.
% We hope this analysis provides insights for designing post-hoc ensemble strategies tailored to different tasks.

\begin{table}[t]
    % \vspace{0mm}
    \centering
    % \fontsize{9}{11}\selectfont
    \setlength{\tabcolsep}{1.8mm} % Set column separation to 1mm
    % \caption{Average rank and average total cost across different datasets.}
    
\begin{tabular}{lccc}
\toprule
     & Single-Best & AutoGluon & \sys \\
    \midrule
        CLS & 2.36  & 1.82  & \textbf{1.38}\\
        REG & 2.63  & 2.03  & \textbf{1.33}\\
        ALL & 2.46  & 1.90  & \textbf{1.36} \\

    \bottomrule
\end{tabular}
  
\caption{Average test rank in AutoGluon's search space.}
\label{tab:autogluon_exp}
\end{table}

\subsection{Comparison with Autogluon}
\label{sec:compare_with_autogluon}
AutoGluon represents a state-of-the-art AutoML system with multi-layer stacking. The ``best\_quality" mode is not only AutoGluon's best-performing mode but also surpasses all other AutoML systems in predictive accuracy~\cite{gijsbers2024amlb}. 
To align with its ensemble strategy, AutoGluon employs a more compact CASH search space compared to VolcanoML. 
Therefore, for a fairer comparison, we replicated its search space, which includes 108 zero-shot models with priorities.  
% We first allocate 1 hour for \sys to train and validate models according to the priorities provided by AutoGluon, forming the base model pool, followed by 1 hour for ensemble optimization.
We use AutoGluon to train base models for up to 1 hour, then allocate the remaining time within the 2-hour limit to optimize the ensemble with \sys.
AutoGluon is given a 2-hour budget using ``best\_quality". 
The average test rank in 80 datasets is shown in Table~\ref{tab:autogluon_exp}, where we also include Single-Best as a baseline.
We can conclude that both AutoGluon and \sys outperform Single-Best, and \sys is the best, with an average rank of 1.36.
% Additionally, we calculate the improvement rates of \sys and AutoGluon over Single-Best for each dataset, which are 10.4\% and 4.0\% on average, respectively.
Additionally, we calculate the improvements in test loss achieved by \sys and AutoGluon over Single-Best across all datasets, which are 10.4\% and 4.0\% on average, respectively.
To evaluate the statistical significance of \sys, we conduct a Wilcoxon signed-rank test for the improvements of the two methods, where the value $p = 0.002 \leq 0.05$, indicating that \sys is statistically significantly better than AutoGluon.
To sum up, the consistent success of \sys across candidate pools derived from two systems~(VolcanoML and AutoGluon) highlights its robustness and general applicability.

\section{Related Work}
\label{sec:related}

\textbf{One-step Ensemble learning.}
% To solve the CASH problem, s
Some methods directly search for an ensemble of models in a single step. Auto-WEKA~\cite{thornton2013auto} adopts a search space covering both ensemble and model hyperparameters, and solves it with Bayesian optimization. Ensemble Optimization~\cite{levesque2016bayesian} maintains a fixed-size ensemble pool and optimizes each model in turn. 
% This strategy suffers from instability due to the risk of adding a bad configuration. 
Autostacker~\cite{chen2018autostacker} and Neural ensemble search~\cite{zaidi2021neural} focus on generating a complex ensemble via evolutionary search.
Another type of work~\cite{shen2022divbo,poduval2024cash} maintains and updates an ensemble selection with consideration of model diversity.
% during the search for the optimal individual model.

%  and TPOT~\cite{olson2016evaluation} 
% focuses on ensembling and cascading to generate complex pipelines, and solves the CASH problem [15] via evolutionary search.

\textbf{Ensemble Selection.} 
% Methods like bagging~\cite{dietterich2000ensemble}, boosting~\cite{freund1996experiments}, and ensemble selection~\cite{caruana2004ensemble} are also used to boost the performance of individual models.
% The two most commonly used and best-performing optimized ensemble strategies are ensemble selection and stacking~\cite{purucker2023assembled}. 
Many AutoML systems, like Auto-Sklearn~\cite{feurer2015efficient}, Auto-Pytorch~\cite{zimmer2021auto}, and MLJAR~\cite{plonska2021mljar} utilize ensemble selection~\cite{okun2009applications}. It iteratively adds models from the pool with replacement to maximize the ensemble performance. CMA-ES~\cite{purucker2023cma} and Q(D)O-ES~\cite{purucker2023q} improve upon ES through choosing models with evolutionary algorithms.
% It starts from an empty ensemble and iteratively adds models from the pool with replacement to maximize the ensemble validation performance.

% Auto-Sklearn~\cite{feurer2015efficient}, AutoGluon~\cite{erickson2020autogluon}, TPOT~\cite{olson2016evaluation}, LightAutoML~\cite{vakhrushev2021lightautoml}, H2O~\cite{ledell2020h2o}, Auto-Pytorch~\cite{zimmer2021auto}, MLJAR~\cite{plonska2021mljar}

\section{Conclusion}

In this paper, we propose \sys, an efficient optimization framework to tune the post-hoc stacking ensemble.
In \sys, we proposed three components: a base model subset selection algorithm with a trade-off between individual model performance and inter-model diversity, a deep stacking ensemble with Dropout and Retain mechanisms to fully explore the potential of multi-layer stacking, and finally a Bayesian optimizer to search for the optimal ensemble strategy.
We evaluated \sys on 80 public datasets and demonstrated its superiority over competitive baselines.

\section{Acknowledgement}
This work is supported by National Natural Science Foundation of China (U23B2048, U22B2037).

\bibliography{aaai2026}

\input{tex/reproductionchecklist}

\clearpage

\section*{Appendix}

\vspace{1em}
\hrule
\vspace{1em}

\appendix
\setcounter{secnumdepth}{2}

\begin{algorithm*}[tb]
\caption{Training a Stacker Model with Dropout and Retain.}
\label{alg:train_with_dropout_and_retain}
\textbf{Input}: The index of the stacker model $i$, Dropout rate $\gamma_0$, Retain or not, Ensemble size $n'$, number of folds $k$, the original training and validation set $\mathcal{D}_{train}$, $\mathcal{D}_{val}$, the predictive features from last layer $\mathcal{Z}_{train}$, $\mathcal{Z}_{val}$, the loss function $\mathcal{L}$.\\
% \textbf{Parameter}: Optional list of parameters\\
\textbf{Output}: Prediction of current stacker model.
\begin{algorithmic}[1] %[1] enables line numbers

\STATE $\text{y\_train} \leftarrow \mathcal{D}_{\text{train}}[:,\ \text{end}]$
\FOR{$j = 1$ {\bf to} $n'$}
   \STATE $\mathcal{L}_j \leftarrow \mathcal{L}(\mathcal{Z}_{train}[:,\ j], \text{y\_train})$
\ENDFOR
\STATE $\mathcal{L}_{\text{min}} \leftarrow \text{Min}(\mathcal{L}_1, \mathcal{L}_2, ..., \mathcal{L}_{n'})$
\STATE $\text{DropoutRates} \leftarrow \text{EmptyList}()$
\FOR{$j = 1$ {\bf to} $n'$}
   \STATE $\text{Append}(\text{DropoutRates}, \frac{\mathcal{L}_{\text{min}}}{\mathcal{L}_j }\gamma_0)$
\ENDFOR

\STATE $\text{folds} \leftarrow \text{SplitIntoKFolds}(\mathcal{D}_{\text{train}}, k)$.
\STATE $\text{training\_prediction} \leftarrow \text{ZeroArray}(\text{length of } \mathcal{D}_{train})$.
\STATE $\text{valid\_prediction} \leftarrow \text{ZeroArray}(\text{length of } \mathcal{D}_{val})$.
\FOR{$j = 1$ {\bf to} $k$}
    \STATE $\text{mask} \leftarrow \text{BernoulliSample}(p=1-\text{DropoutRates})$.
    % \STATE Training Phase
    \STATE $\text{training\_indices} \leftarrow \text{GetAllIndicesExcept}(j, \text{folds})$.
    \STATE $\text{prediction\_indices} \leftarrow \text{GetIndices}(j, \text{folds})$.
    \STATE $\mathcal{D}_{fold\_train} \leftarrow \text{Concatenate}(\mathcal{Z}_{train}[\text{training\_indices},\text{mask}], \mathcal{D}_{train}[\text{training\_indices},:])$. \COMMENT{skip connection}
    \STATE $S_j \leftarrow \text{TrainModel}(\mathcal{D}_{fold\_train})$
    % \STATE $\text{fold\_prediction} \leftarrow \text{Predict}(S_i, \mathcal{D}_{fold\_predict})$
    \STATE $\mathcal{D}_{fold\_predict} \leftarrow \text{Concatenate}(\mathcal{Z}_{train}[\text{prediction\_indices},\text{mask}], \mathcal{D}_{train}[\text{prediction\_indices},:])$.
    \STATE $\text{training\_prediction}[\text{prediction\_indices}] \leftarrow \text{Predict}(S_j, \mathcal{D}_{fold\_predict})$. \COMMENT{predict on the prediction fold of the training set}
    % \STATE Training Phase
    \STATE $\mathcal{D}_{fold\_val} \leftarrow \text{Concatenate}(\mathcal{Z}_{val}[: ,\text{mask}], \mathcal{D}_{val})$.
    \STATE $\text{valid\_prediction} \leftarrow\text{valid\_prediction} + \frac{1}{k}\text{Predict}(S_j, \mathcal{D}_{fold\_val})$. \COMMENT{predict on the validation set}

\ENDFOR

\IF{$\text{Retain or not}$ {\bf is}\ True}
\STATE $\text{y\_valid} \leftarrow \mathcal{D}_{\text{valid}}[:,\ \text{end}]$.
\STATE $\mathcal{L}_{val} \leftarrow \mathcal{L}(\text{valid\_prediction}, \text{y\_val})$.
\STATE $\mathcal{L}_{last} \leftarrow \mathcal{L}(\mathcal{Z}_{val}[:,\ i], \text{y\_val})$.
\IF{$\mathcal{L}_{val} > \mathcal{L}_{last}$}
\STATE $\text{training\_prediction} \leftarrow \mathcal{Z}_{train}[:,\ i]$.
\STATE $\text{valid\_prediction} \leftarrow \mathcal{Z}_{val}[:,\ i]$. \COMMENT{Replace the output with that generated by its predecessor}
\ENDIF 

\ENDIF
    
\STATE \textbf{return} $\text{training\_prediction}$ and $\text{valid\_prediction}$.

\end{algorithmic}
\end{algorithm*}

\section{Theoretical Proofs}
\label{app:proof}

In this section, we provide the proof of Theorem~\ref{the:decrease}. 

\textbf{Proof sketches}: To prove Theorem~\ref{the:decrease}, we 1) first establish Lemma~\ref{lem:no_changing_beta}, showing that the expected value of the sampled weight is the product of the probability of retention and the original weight without Dropout, leveraging the orthogonality of predictions and linearity of expectation. 2) Next, for Theorem~\ref{the:expectation}, we apply the law of large numbers to demonstrate that the average estimator converges almost surely to the product as the times of sampling $\to \infty$, using the result from the lemma.
3) Finally, with Lemma~\ref{lem:no_changing_beta} and Theorem~\ref{the:expectation}, we can represent the proportion of $|\tilde{\beta}_1|$ in the sum of all others, and prove it's monotonically non-decreasing with respect to $\gamma_0$.

\begin{lemma}
    \label{lem:no_changing_beta}
    Consider a set of uncorrelated base models predictions $\mathcal{Z} = \{z_1, z_2, ..., z_{n'} \}$
    in an ensemble with prediction $z_{\text{ens}} = \sum_i \beta_iz_i$.
    If all the $z_i$ are orthogonal, we perform random samplings of $\mathcal{Z}$ (each prediction retained with probability $p_i$).
    If the prediction is dropped, its weight is filled with 0.
    Then the augmented estimator $\hat{\beta}_i$ obtained from random sampling will satisfy:
    $$
        \mathbb{E}[\hat{\beta}_i] = p_i\hat{\beta}_{\text{old}, i}
    $$
    where $\hat{\beta}_{\text{old}}$ is the original weight without sampling.
\end{lemma}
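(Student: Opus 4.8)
The plan is to exploit the orthogonality hypothesis to reduce the fitting of the weighted ensemble to $n'$ decoupled one-dimensional regressions, and then to strip away the Dropout randomness by linearity of expectation.

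First I would pin down the fitting rule implicit in the statement: the weights of the weighted ensemble $z_{\text{ens}} = \sum_i \beta_i z_i$ are those minimizing squared loss against the target $y$ (the linear blender). Writing $Z$ for the matrix whose columns are $z_1, \ldots, z_{n'}$, the normal equations give $\hat\beta_{\text{old}} = (Z^\top Z)^{-1} Z^\top y$. The crucial consequence of the orthogonality assumption $\langle z_i, z_j\rangle = 0$ for $i \neq j$ is that $Z^\top Z$ is diagonal, so the system decouples and each coefficient is obtained independently as $\hat\beta_{\text{old},i} = \langle z_i, y\rangle / \|z_i\|^2$.

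Next I would encode a single Dropout sample through Bernoulli indicators $B_i$, where $B_i = 1$ (with probability $p_i$) means $z_i$ is retained and $B_i = 0$ means it is dropped and its weight is filled with $0$. The key observation — which I expect to be the \emph{main obstacle}, though it is short once stated — is that deleting a subset of the predictors leaves the retained columns still mutually orthogonal, so refitting on the retained subset never changes the coefficient of any surviving predictor: it remains $\langle z_i, y\rangle/\|z_i\|^2 = \hat\beta_{\text{old},i}$. Hence, for a single sample, the augmented estimator is exactly $\hat\beta_i = B_i\,\hat\beta_{\text{old},i}$.

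Finally I would take expectations over the Dropout randomness. Since $\mathbb{E}[B_i] = p_i$ and $\hat\beta_{\text{old},i}$ is deterministic, linearity of expectation gives $\mathbb{E}[\hat\beta_i] = p_i\,\hat\beta_{\text{old},i}$, which is the claim. The only case to handle separately is the degenerate one $\|z_i\| = 0$; apart from that, the whole argument is an invariance-of-orthogonal-regression-coefficients statement followed by an elementary expectation, and it sets up the almost-sure convergence needed for Theorem~\ref{the:expectation}.
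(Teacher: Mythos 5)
Your proposal is correct and follows essentially the same route as the paper's proof: both exploit orthogonality to diagonalize the Gram matrix, observe that dropping columns leaves the surviving coefficients unchanged so that $\hat\beta_i = B_i\hat\beta_{\text{old},i}$, and conclude by taking the expectation of the Bernoulli indicator. Your explicit remark about the degenerate case $\|z_i\|=0$ is a small point of extra care the paper omits, but otherwise the arguments coincide.
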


\begin{proof}
    Assume $X$ is a matrix where the $i$-th column represents $z_i$, and $Y$ is the ground truth. Then the ensemble weights $\hat{\beta}_{\text{old}}$ can be computed with closed form solution of the objective $\text{min}_{\beta} ||Y-X \beta ||_2$ as:
    $$
        \hat{\beta}_{\text{old}} = (X^T X)^{-1} X^T Y.
    $$
    
    Since $X^T X$ is diagonal, its inverse is also diagonal:
    $$
        (X^T X)^{-1} = \begin{pmatrix}
            \frac{1}{g_1} & & \\
            & \ddots & \\
            & & \frac{1}{g_d}
        \end{pmatrix}.
    $$
    where $g_i = \sum_j x_{ji}^2$ is the sum of squares of the $i$-th base prediction.

    % And $X^T Y = \begin{pmatrix} \sum_i x_{i1} y_i \\ \vdots \\ \sum_i x_{id} y_i \end{pmatrix}$, 
    thus
    $$
        \hat{\beta}_{\text{old}} = 
        \begin{pmatrix}
            \frac{1}{g_1} & & \\
            & \ddots & \\
            & & \frac{1}{g_d}
        \end{pmatrix}
        \begin{pmatrix}
            \sum_i x_{i1} y_i \\
            \vdots \\
            \sum_i x_{id} y_i
        \end{pmatrix}
        = 
        \begin{pmatrix}
            \frac{\sum_i x_{i1} y_i}{g_1} \\
            \vdots \\
            \frac{\sum_i x_{id} y_i}{g_d}
        \end{pmatrix}.
    $$
    
    Therefore, the $i$-th component is
    $$
        \hat{\beta}_{\text{old}, i} = \frac{\sum_j x_{ji} y_j}{g_i}.
    $$

Consider independently sampling columns of the design matrix $X$, retaining each column with probability $p_i$, resulting in a reduced matrix $X'$ with $k$ columns.
The new regression problem is:
$$
    \min_{\beta} \|Y - X'\beta\|_2^2,
$$
with the least squares solution:
$$
    \hat{\beta} = (X'^T X')^{-1} X'^T Y.
$$

Since the columns of $X$ are orthogonal, the columns of $X'$ remain orthogonal after sampling, so $X'^T X'$ is diagonal:
$$
    X'^T X' = \begin{pmatrix}
        g_{j_1} & & \\
        & \ddots & \\
        & & g_{j_k}
    \end{pmatrix},
$$
where $j_1, \dots, j_k$ are the indices of retained columns.

Thus,
$$
    \hat{\beta} = 
    \begin{pmatrix}
        \frac{1}{g_{j_1}} & & \\
        & \ddots & \\
        & & \frac{1}{g_{j_k}}
    \end{pmatrix}
    \begin{pmatrix}
        \sum_i x_{ij_1} y_i \\
        \vdots \\
        \sum_i x_{ij_k} y_i
    \end{pmatrix}
    =
    \begin{pmatrix}
        \frac{\sum_i x_{ij_1} y_i}{g_{j_1}} \\
        \vdots \\
        \frac{\sum_i x_{ij_k} y_i}{g_{j_k}}
    \end{pmatrix}.
$$

We augmented $\hat{\beta}$ such that they have the same dimension, for the dropout column, we fill 0 into $\hat{\beta}_i$, such that their index aligned.
On the other hand, if the $i$-th column is retained, then $\hat{\beta}_i = \hat{\beta}_{\text{old}, i}$.
In this way, the estimator for the $i$-th position satisfies:
$$
    \hat{\beta}_i = 
    \begin{cases}
        \hat{\beta}_{\text{old}, i}, & \text{with probability } p_i, \\
        0, & \text{with probability } 1 - p_i.
    \end{cases}
$$

Thus, its expectation is:
$$
    \mathbb{E}[\hat{\beta}_i] = p_i \cdot \hat{\beta}_{\text{old}, i} + (1 - p_i) \cdot 0 = p_i \hat{\beta}_{\text{old}, i}.
$$

\end{proof}

\begin{theorem}[Expectation of Average Estimator with Multiple Sampling]
    \label{the:expectation}
    Perform $N$ independent random samplings of $\mathcal{Z}$ (each prediction retained with probability $p_i$), obtaining estimators $\hat{\beta}^{(n)}$ each time. 
    Define the average estimate for the $i$-th weight as:
    $$
        \bar{\beta}_i^{(N)} = \frac{1}{N} \sum_{n=1}^N \hat{\beta}_i^{(n)}.
    $$
    Then as $N \to \infty$, we have:
    $$
        \bar{\beta}_i^{(N)} \xrightarrow{\text{a.s.}} p_i \hat{\beta}_{\text{old}, i},
    $$
\end{theorem}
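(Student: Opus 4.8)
The plan is to recognize this statement as a direct consequence of the Strong Law of Large Numbers (SLLN), once the i.i.d.\ structure and integrability of the per-sample estimators are verified. The key point is that Lemma~\ref{lem:no_changing_beta} has already computed the mean of each individual sampled estimator, so the only remaining work is to justify the passage from expectation to almost-sure limit of the average.

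First I would fix a coordinate $i$ and view the sequence $\hat{\beta}_i^{(1)}, \hat{\beta}_i^{(2)}, \ldots$ as the scalar random variables produced by the successive independent samplings. From the proof of Lemma~\ref{lem:no_changing_beta}, when column $i$ is retained (probability $p_i$) the estimator equals $\hat{\beta}_{\text{old},i}$, and when it is dropped (probability $1-p_i$) the aligned/zero-padded value is $0$. Hence each $\hat{\beta}_i^{(n)}$ follows the same two-point law placing mass $p_i$ on $\hat{\beta}_{\text{old},i}$ and mass $1-p_i$ on $0$. Independence of the $\hat{\beta}_i^{(n)}$ across $n$ follows from the independence of the $N$ samplings, and identical distribution follows from reusing the same retention probability $p_i$ each time; thus $\{\hat{\beta}_i^{(n)}\}_{n\ge 1}$ is i.i.d.

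Next I would check integrability, which is immediate: each $\hat{\beta}_i^{(n)}$ takes at most two finite values, so it is bounded and lies in $L^1$ with $\mathbb{E}[|\hat{\beta}_i^{(n)}|] = p_i\,|\hat{\beta}_{\text{old},i}| < \infty$. By Lemma~\ref{lem:no_changing_beta} the common expectation is $\mathbb{E}[\hat{\beta}_i^{(n)}] = p_i \hat{\beta}_{\text{old},i}$. Applying Kolmogorov's SLLN to this i.i.d.\ integrable sequence then yields
$$
    \bar{\beta}_i^{(N)} = \frac{1}{N}\sum_{n=1}^N \hat{\beta}_i^{(n)} \xrightarrow{\text{a.s.}} \mathbb{E}[\hat{\beta}_i^{(1)}] = p_i \hat{\beta}_{\text{old},i}
$$
as $N \to \infty$, which is exactly the claim.

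I do not expect a genuine obstacle here, since the result is essentially the SLLN combined with the mean already supplied by the lemma. The only matters requiring care are bookkeeping rather than depth: confirming that the zero-padding convention of Lemma~\ref{lem:no_changing_beta} makes each coordinate a well-defined scalar random variable across samplings, so that coordinatewise averaging is meaningful; and observing that the argument runs one coordinate at a time, with the joint almost-sure statement following because a finite intersection of probability-one events again has probability one. If one prefers not to cite the SLLN as a black box, the boundedness of the summands makes a second-moment argument via Chebyshev and Borel--Cantelli equally viable, but the SLLN route is the cleanest.
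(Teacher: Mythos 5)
Your proposal is correct and follows essentially the same route as the paper: invoke Lemma~\ref{lem:no_changing_beta} for the common mean and then apply the strong law of large numbers to the i.i.d.\ sequence of per-sample estimators. The additional checks you supply (the explicit two-point law, boundedness/integrability, and the coordinatewise bookkeeping) are sound refinements of the same argument rather than a different approach.
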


\begin{proof}
    From Lemma~\ref{lem:no_changing_beta}, we know that the expectation of $\hat{\beta}_i^{(n)}$ is:
    $$
        \mathbb{E}[\hat{\beta}_i^{(n)}] = p_i \hat{\beta}_{\text{old}, i}.
    $$

    Since each sampling is independent and identically distributed, by the strong law of large numbers, the sample mean almost surely converges to the expected value:
    $$
        \bar{\beta}_i^{(N)} = \frac{1}{N} \sum_{n=1}^N \hat{\beta}_i^{(n)} \xrightarrow{\text{a.s.}} \mathbb{E}[\hat{\beta}_i^{(n)}] = p_i \hat{\beta}_{\text{old}, i}, \quad \text{as } N \to \infty.
    $$

    Thus, for large $N$, the average weight for the $i$-th model approaches $p_i$ times the original weight.
\end{proof}

\subsubsection{Proof of Theorem~\ref{the:decrease}}

\begin{proof}

    We define $p_i = 1-d_i$, which is the retention probability of each prediction. From Theorem~\ref{the:expectation}, as $N \to \infty$, the absolute average weight of the $i$-th prediction is:
    $$
        |\tilde{\beta}_i| = p_i|\beta_i|
    $$

    Then the proportion of $|\tilde{\beta}_1|$ in the sum of all others is:
    \begin{equation}
        \label{eq:prop}
        r = \frac{p_1 |\beta_1|}{\sum_{i=1}^d p_i |\beta_i|} 
        = \frac{|\beta_1|}{\sum_{i=1}^d \frac{p_i}{p_1} |\beta_i|}
    \end{equation}

    As $\beta_i$ is constant with the change of $\gamma_0$, we focus on:
    $$
        \frac{p_i}{p_1} = \frac{1-d_i}{1-d_1} = \frac{1-\rho_i\gamma_0}{1-\gamma_0}
        = \frac{1-\rho_i}{1-\gamma_0} + \rho_i
    $$

    Note that $\rho_i \leq 1$, then $1-\rho_i \geq 0$, $\frac{p_i}{p_1}$ is monotonically non-decreasing with respect to $\gamma_0$ on the interval [0, 1), and strictly increasing if $\rho_i \neq 1$.

    As a result, each term in the denominator of Equation~\ref{eq:prop} is monotonically non-decreasing with respect to $\gamma_0$. Consequently, $r$ is monotonically non-increasing with respect to $\gamma_0$, and strictly decreasing when there exists some $\rho_i \neq 1$.

    Thus, the theorem is proved.
\end{proof}

\section{Pseudocode of Training a Stacker Model with Dropout and Retain}
\label{app:pseudcode_of_training}

Algorithm~\ref{alg:train_with_dropout_and_retain} exhibits the pseudocode of training a stacker model with Dropout and Retain in \sys.
First, \sys will calculate the loss between each predictive feature from the last layer and the training labels~(Lines 1-4).
It then determines the dropout rate for each predictive feature based on the training losses and $\gamma_0$.
Next, \sys apply $k$-fold cross-validation to the training set. In each fold, it samples the mask for each predictive feature to retain using a Bernoulli Distribution, and features with a mask of 0 are dropped~(Line 14).
The stacker model is then trained on the training portion of the fold and predicts on the rest part~(Line 15-20). For the validation set,  predictions from all folds are averaged~(Line 21-22).
Upon completing the training for each fold, if the Retain mechanism is enabled, \sys calculates the validation loss of the current stacker model compared to the model in the previous layer at the same position~(Line 24-27). If the current stacker model performs worse, its outputs will be replaced by those generated by its predecessor~(Line 28-30).
In the end, the outputs in both the training set and the validation set will be returned~(Line 33).

\begin{algorithm}[tb]
\caption{Post-hoc Stacking Optimization Procedure}
\label{alg:algorithm}
\textbf{Input}: The search budget $\mathcal{B}$, the candidate base model pool $\mathbb{M}$, the ensemble search space $\mathcal{X}$, the training and validation set $\mathcal{D}_{train}$, $\mathcal{D}_{val}$.\\
% \textbf{Parameter}: Optional list of parameters\\
\textbf{Output}: Best ensemble strategy
\begin{algorithmic}[1] %[1] enables line numbers
\STATE Initialize observations as $D = \emptyset$;.
\STATE Generate predictions of all models in $\mathbb{M}$ on the validation set.
\WHILE{$budget\ \mathcal{B}\ does\ not\ exhaust$}
% \STATE Do some action.
\IF {$|D| < 5$}
\STATE Suggest a random ensemble configuration $\tilde{x} \in \mathcal{X}$.
\ELSE
\STATE Train a surrogate model on current observations $D$.
\STATE Sample configuration candidates from $\mathcal{X}$ and choose the next ensemble configuration by $\tilde{x} \leftarrow \arg\max_{x \in \mathcal{X}}\text{EI}(x)$.
\ENDIF
\STATE Select base model subset $\mathbb{K}$ from $\mathbb{M}$ according to the ensemble size and diversity weight $\tilde{\omega}$ in $\tilde{x}$.
\STATE Build a deep stacking ensemble $f_{\tilde{x}}$ with $\mathbb{K}$.
\STATE Train the ensemble $f_{\tilde{x}}$ on $\mathcal{D}_{train}$ and evaluate its performance on $\mathcal{D}_{val}$ as $\tilde{y}$;
\STATE Update the observations $D \leftarrow D \cup { (\tilde{x}, \tilde{y}) }$;
\ENDWHILE
\STATE \textbf{return} the optimal ensemble configuration ${x}^*$ in $D$.
\end{algorithmic}
\end{algorithm}

\section{Pseudocode and Reuse Strategy of \sys}
\label{app:pseudcode_of_sys}

In this section, we provide the pseudocode of \sys in Algorithm~\ref{alg:algorithm} and the reuse strategy of \sys.

First, \sys will collect the predictions of all models in the candidate pool on the validation set~(Line 1).
Then, while the budget has not been exhausted, 
the Bayesian optimizer fits a \textit{surrogate model} based on observed configurations $D$~(Line 7).
It then suggests the next configuration $\tilde{x}$ to evaluate by maximizing the \textit{acquisition function} $\tilde{x} = \arg\max_{x \in \mathcal{X}}EI(x)$ within the hyperparameter space $\mathcal{X}$~(Line 8). 
According to hyperparameters in $\tilde{x}$, \sys selects the base model subset and builds a stacking ensemble $f_{\tilde{x}}$(Lines 10-11).
After that, \sys trains and evaluates $f_{\tilde{x}}$ to obtain its performance $\tilde{y}$ and augment the observations by $D = D \cup { (\tilde{x}, \tilde{y}) }$.
After exhausting the search budget, the best configuration found in the observations is returned~(Line 15).
Concretely, \sys uses the Probabilistic Random Forest~\cite{hutter2011sequential} as the surrogate of BO and applies the Expected Improvement(EI) as the acquisition function to estimate the performance improvement given an unseen configuration.

By default, \sys saves to disk each stacking ensemble constructed during the optimization process. If a new ensemble configuration $f_{\tilde{x}}$ shares the same base models subset, dropout rate, and retain settings as a previously trained ensemble $f_{x'}$, the cached results can be reused to avoid redundant computation. Specifically, if $f_{\tilde{x}}$ has more stacking layers than $f_{x'}$, training can resume by stacking additional layers on top of $f_{x'}$. Conversely, if the number of layers in $f_{\tilde{x}}$ is fewer than or equal to that in $f_{x'}$, only the blender model corresponding to the target layer needs to be retrained on top of $f_{x'}$’s intermediate output.

\begin{figure}[t!]
	\centering
	% \scalebox{0.95}[.95] {
	\includegraphics[width=0.8\linewidth]{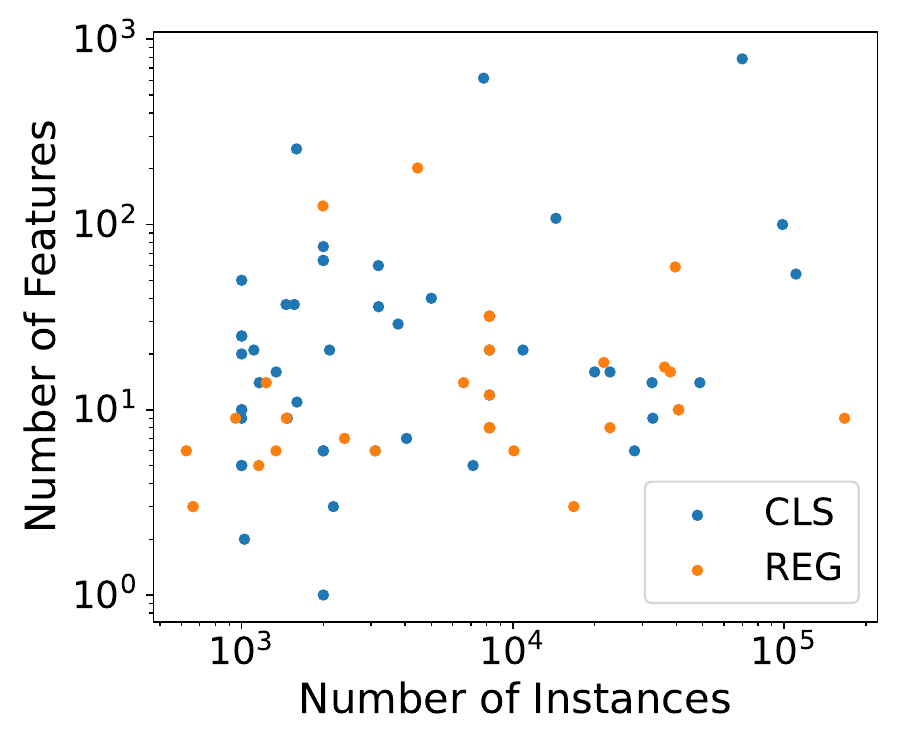}
         % }
	\caption{Datasets by Data Dimensions}
        \label{fig:size_feature}
        % \vspace{-1em}
\end{figure}

\section{Dataset Details}
\label{app:dataset_details}

In Tables~\ref{tab:cls_dataset} and~\ref{tab:reg_dataset}, we provide the details of the datasets used in our experiment.
The datasets are selected according to the following list of criteria: 
\begin{itemize}
    \item \textbf{Representative of real-world}. We limit artificial problems, including only a small selection of such problems, either based on their widespread use (e.g., kr-vs-kp) or because they pose difficult problems. 
    \item \textbf{Diversity in task type}. We include 50 classification datasets, 12 of which are multi-class datasets. We also include 30 regression datasets, which many studies tend to overlook in their experiments~\cite{poduval2024cash,erickson2020autogluon}.
    \item \textbf{Diversity in data Dimensions}. We aim to comprehensively cover datasets ranging from small to large scales in both feature and sample dimensions. As shown in Figure~\ref{fig:size_feature}, the number of features ranges from 1 to 784, and the number of instances ranges from 0.6k to 166k.
    \item \textbf{Diversity in the problem domains}. We do not want the benchmark to skew towards any application domain in particular. Therefore, various domains like software quality problems~(jm1, kc1, pc1, pc3, pc4), handwritten digit recognition~(mfeat-morphological(1), mnist\_784), Aerospace~(delta\_ailerons), Business~(amazon\_employee), computer performance~(puma32H, cpu\_small, cpu\_act, puma8NH), and so forth.
     \item \textbf{Freely available}. All the datasets are collected from OpenML~\cite{vanschoren2014openml}, and we provide the OpenML ID as the identification of the dataset.
\end{itemize}

\section{Search Space of CASH}
\label{app:search_space_of_cash}

The search space for algorithms and feature engineering operators is presented in Tables~\ref{tab:search_space_for_algorithms} and~\ref{tab:search_space_of_FE}, respectively.

\begin{table}[t]
\centering

\begin{subtable}
    \centering
    % \caption{Type of Classifier}
    \begin{tabular}{lcccc}
        \hline
        Type of Classifier & \#$\lambda$ & cat (cond) & cont (cond) \\
        \hline
        AdaBoost & 4 & 1 (-) & 3 (-) \\
        Random forest & 5 & 2 (-) & 3 (-) \\
        Extra trees & 5 & 2 (-) & 3 (-) \\
        Gradient boosting & 7 & 1 (-) & 6 (-) \\
        KNN & 2 & 1 (-) & 1 (-) \\
        LDA & 4 & 1 (-) & 3 (1) \\
        QDA & 1 & - & 1 (-) \\
        Logistic regression & 4 & 2 (-) & 2 (-) \\
        Liblinear SVC & 5 & 2 (2) & 3 (-) \\
        LibSVM SVC & 7 & 2 (2) & 5 (-) \\
        LightGBM & 6 & - & 6 (-) \\
        \hline
    \end{tabular}
\end{subtable}

% \vspace{1em}

\begin{subtable}
    \centering
    % \caption{Type of Regressor}
    \begin{tabular}{lcccc}
        \hline
        Type of Regressor & \#$\lambda$ & cat (cond) & cont (cond) \\
        \hline
        AdaBoost & 4 & 1 (-) & 3 (-) \\
        Random forest & 5 & 2 (-) & 3 (-) \\
        Extra trees & 5 & 2 (-) & 3 (-) \\
        Gradient boosting & 7 & 1 (-) & 6 (-) \\
        KNN & 2 & 1 (-) & 1 (-) \\
        Lasso & 3 & - & 3 (-) \\
        Ridge & 4 & 1 (-) & 3 (-) \\
        Liblinear SVC & 5 & 2 (2) & 3 (-) \\
        LibSVM SVC & 8 & 3 (3) & 5 (-) \\
        LightGBM & 6 & - & 6 (-) \\
        \hline
    \end{tabular}
\end{subtable}
\caption{Search space for ML algorithms. We distinguish categorical (cat) hyperparameters from numerical (cont) ones. The numbers in the brackets are conditional hyperparameters.}
\label{tab:search_space_for_algorithms}
\end{table}

\begin{table}[!t]
\centering
\setlength{\tabcolsep}{1mm} % Set column separation to 1mm
\begin{tabular}{lcccc}
    \hline
    Type of Operator & \#$\lambda$ & cat (cond) & cont (cond) \\
    \hline
    Minmax & 0 & - & - \\
    Normalizer & 0 & - & - \\
    Quantile & 2 & 1 (-) & 1 (-) \\
    Robust & 2 & - & 2 (-) \\
    Standard & 0 & - & - \\
    \hline
    Cross features & 1 & - & 1 (-) \\
    Fast ICA & 4 & 3 (1) & 1 (1) \\
    Feature agglomeration & 4 & 3 (2) & 1 (-) \\
    Kernel PCA & 5 & 1 (1) & 4 (3) \\
    Rand. kitchen sinks & 2 & - & 2 (-) \\
    LDA decomposer & 1 & 1 (-) & - \\
    Nystroem sampler & 5 & 1 (1) & 4 (3) \\
    PCA & 2 & 1 (-) & 1 (-) \\
    Polynomial & 2 & 1 (-) & 1 (-) \\
    Random trees embed. & 5 & 1 (-) & 4 (-) \\
    SVD & 1 & - & 1 (-) \\
    Select percentile & 2 & 1 (-) & 1 (-) \\
    Select generic univariate & 3 & 2 (-) & 1 (-) \\
    Extra trees preprocessing & 5 & 2 (-) & 3 (-) \\
    Linear SVM preprocessing & 5 & 3 (3) & 2 (-) \\
    \hline
\end{tabular}
\caption{Search space of feature engineering operators.}
\label{tab:search_space_of_FE}
\end{table}

\section{Implementation Details}
\label{app:implementation_details}
The Bayesian optimization surrogate is implemented using OpenBox 0.8.1~\cite{li2021openbox,jiang2024openbox}, an open-source toolkit designed for black-box optimization tasks.
We adhered to the open-source versions or the methodologies outlined in the original papers for all other baseline implementations.
For EO, the ensemble size is set to 12.
For Autostacker, the maximum number of layers and the maximum number of nodes per layer are 5 and 3, the population size is set to 200. 
In the case of OptDivBO, the parameter of $\tau$ is 0.2.
For all ensemble selection designs, the ensemble size is fixed at 25, with ensemble weights trained using the base models’ predictions on the validation data.
For all stacking designs, the number of folds for cross-validation during training is set to 5.
All the experiments are conducted with 24 CPU cores and 80G memory.

To fully and fairly utilize all non-test data, ensemble selection-based methods determine ensemble weights on the validation set. After that, the base models are refitted using both the training and validation sets. 
Fixed-strategy stacking methods concatenate the training and validation sets for cross-validation training. Optimized stacking methods (Autostacker, AutoGluon, and \sys) first perform cross-validation on the training set, evaluate performance on the validation set, and finally refit the optimal ensemble using both training and validation sets.

\section{Effectiveness of Deep Stacking Structure}
\label{app:exp_effectiveness_of_struc}

To assess the effectiveness and tuning potential of the deep stacking framework of \sys, this experiment compares the performance of different values for four hyperparameters of deep stacking (using a fixed subset of 30 base models selected with a diversity weight of 0.3). 
For the dropout rate, values of 0, 0.1, 0.2, 0.3, and 0.4 are tested; 
For the number of layers, values of 1, 2, 3, 4, and 5 are considered. 
For the blender model, ensemble selection~(ES), Linear, and LightGBM~(LGB) are tested.
And ``retain or not" takes True or False
Due to the vast configuration space, we conducted separate experiments for each hyperparameter instead of a full grid search. 
Specifically, for each hyperparameter, we held the other hyperparameters at their median values from the set of possible values, then iterated over all values for the target hyperparameter to construct stacking ensembles. The optimal value (\textsc{Opt}) was selected based on validation set performance. 
This optimal configuration was then compared to all fixed-value ensembles for the hyperparameter on the test set, with the average rank calculated across 80 datasets. Notably, when fixing the Blender model, we used Linear, and when fixing the Retain strategy, we set it to True.

Figure~\ref{fig:struc} shows the average test rank across each hyperparameter’s values, where each hyperparameter experiment was conducted independently and compared separately.
From this analysis, we observe that 1) for the three hyperparameters—dropout rate, number of layers, and blender model—no specific value consistently outperforms others. This is evident from the fact that the average rank differences between the top two fixed value schemes are only 0.03, 0.10, and 0.18, excluding \text{Opt}. Therefore, the optimal values for these hyperparameters are highly task-dependent. Correspondingly, tuning these parameters significantly improves performance, as \textsc{Opt} surpasses the best fixed scheme by 0.83, 0.73, and 0.61 in average rank, far exceeding the differences between fixed values. 2) For the retain or not hyperparameter, "True" already shows a clear advantage over "False", with average ranks of 1.5 and 1.85, respectively. However, selecting values based on the validation set can further improve the rank to 1.3.
In summary, all four hyperparameters of the deep stacking framework merit tuning to identify task-specific optimal values.

\begin{figure}[t!]
	\centering
	% \scalebox{0.95}[.95] {
	\includegraphics[width=0.7\linewidth]{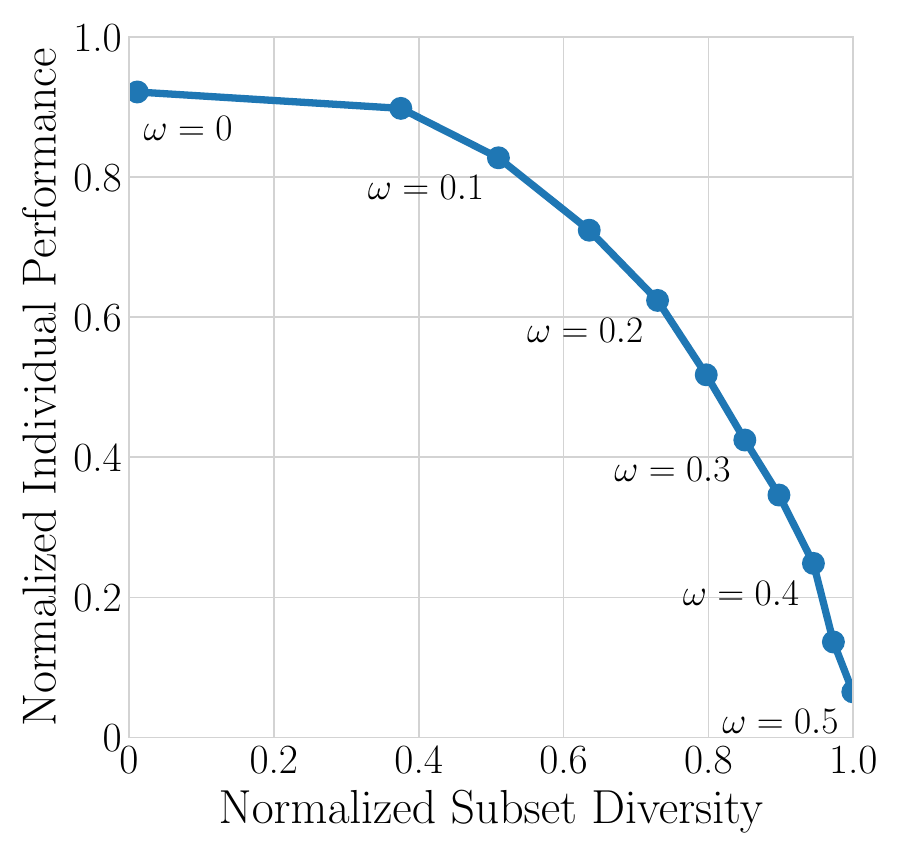}
         % }
	\caption{Relationship between normalized average individual performance and normalized diversity among the base model subset across 80 datasets for varying diversity weights. We execute base model subset selection with 11 diversity weights ranging from 0 to 0.5 in increments of 0.05. For each dataset, the average performance of 30 selected base models with different diversity weights was calculated and normalized using min-max scaling from 0 to 1. The subset diversity was assessed and normalized in the same way. 
}
        \label{fig:perf_with_div}
        % \vspace{-1em}
\end{figure}

\section{Trade-off Between Individual Model Performance and Inter-model Diversity}
\label{app:exp_trade-off}

In this experiment, we aim to validate the base model selection algorithm of \sys by exploring the impact of varying diversity weights. By setting the ensemble size to 30 and adjusting the diversity weights, ranging from 0 to 0.5 in increments of 0.05, we investigate the trade-off between individual model performance and subset diversity. 

Specifically, for each diversity weight value, we calculated the average individual performance of the selected base models, defined as the average of their metrics on the task. For classification tasks, this metric is $accuracy$, and for regression tasks, it is the negative of $mean\ squared\ error$~(MSE). A higher value indicates better individual performance.
Similarly, we assessed the diversity within the selected base models subset by computing the average pair-wise diversity, derived from the Equation~\ref{eq:div}. We use its negative to describe diversity. A higher value indicates greater diversity.

To synthesize results across all tasks, we apply min-max normalization to both individual performance and subset diversity for each dataset across the 11 diversity weight values, scaling them to a range of 0 to 1. Subsequently, we average these normalized results over 80 tasks and show the results in Figure~\ref{fig:perf_with_div}.
We observed that: 1) The 11 points, from left to right, represent the increasing diversity weights from the smallest to the largest value. Correspondingly, the individual performance of the selected base model subset decreases, while diversity increases, demonstrating a clear trade-off.
2) The trade-off curve is convex towards the upper right. This indicates that as the diversity weight increases, diversity gains diminish compared to the loss in individual performance. Initially, small increases in dropout improve diversity without significantly sacrificing performance. However, further increases lead to greater performance losses with smaller diversity gains.

\begin{figure}[tb]
% % \vspace{-3mm}
\centering

\label{fig:maxinum_weight}
\includegraphics[width=0.6\linewidth]{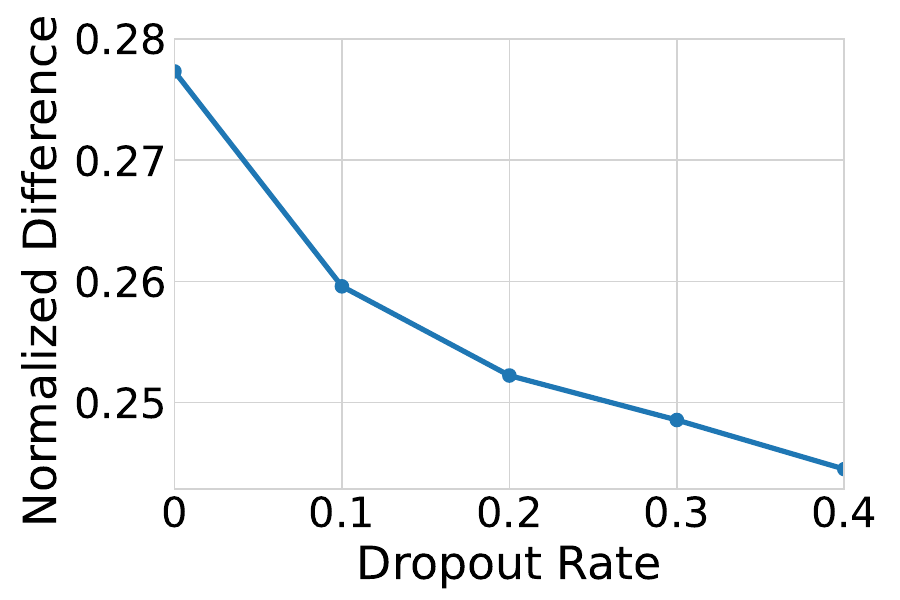}

\caption{Normalized difference between training and test performance changing with dropout rate.}
\label{fig:train_test_diff}
% \vspace{-1em}
\end{figure}

\begin{figure}[tb]
% % \vspace{-3mm}
\centering

\label{fig:maxinum_weight}
\includegraphics[width=0.6\linewidth]{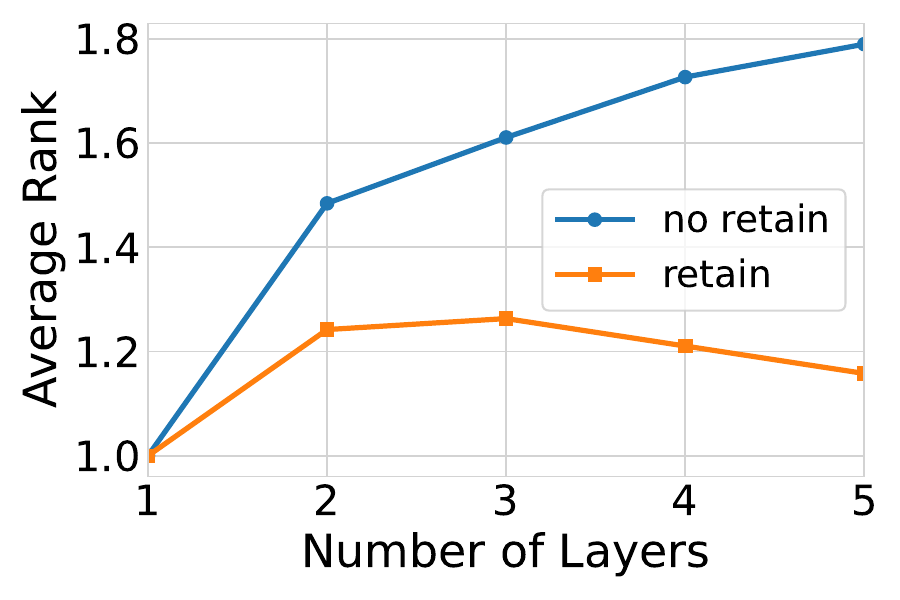}

\caption{Average rank of output with(out) Retain.}
\label{fig:avg_rank_of_output}
% \vspace{-1em}
\end{figure}

\begin{figure*}[t!]
\centering  
\subfigure[Optimal base model selection strategy across all tasks.]{
\label{fig:base_choose_with_tasktype}
\includegraphics[width=0.3\linewidth]{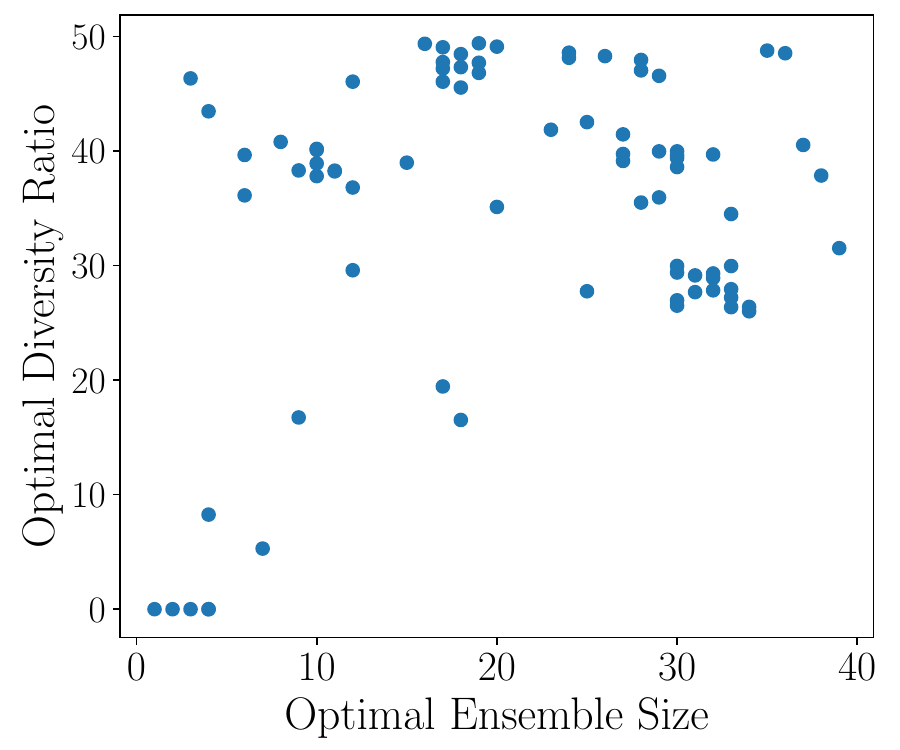}
}
% \hspace{0.02\linewidth}
\subfigure[Optimal number of layers for different task types.]{
\label{fig:layer_with_tasktype}
\includegraphics[width=0.3\linewidth]{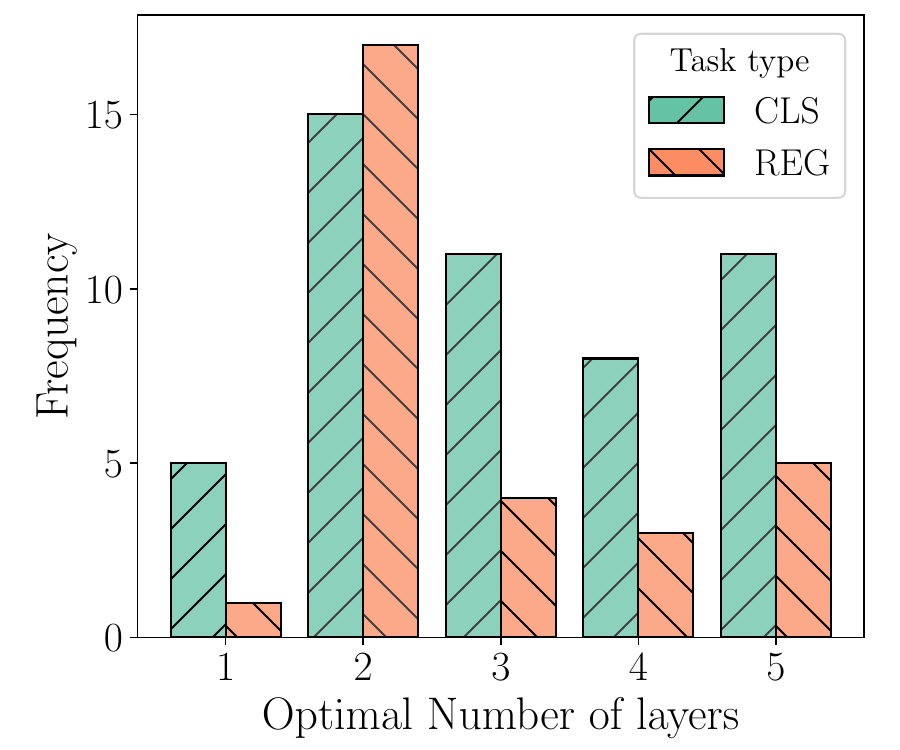}
}
% \hspace{0.02\linewidth}
\subfigure[Optimal dropout ratios for tasks with different dataset sizes.]{
\label{fig:dropout_with_instance}
\includegraphics[width=0.3\linewidth]{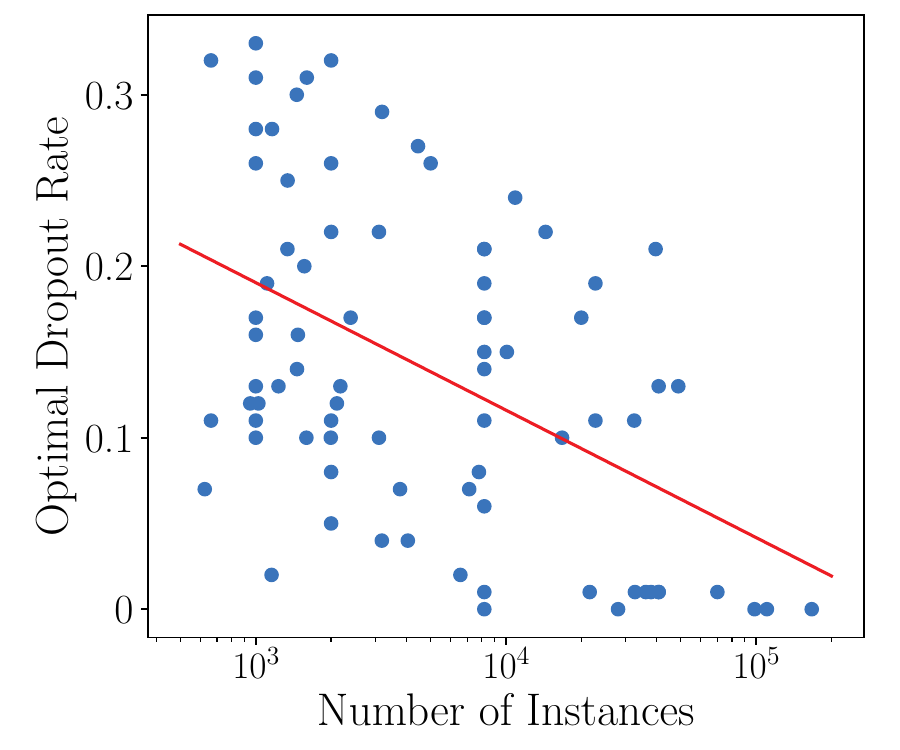}
}
% \hspace{0.03\linewidth}
\subfigure[Optimal blender model for tasks with different dataset sizes.]{
\label{fig:blender_model_with_instance}
\includegraphics[width=0.3\linewidth]{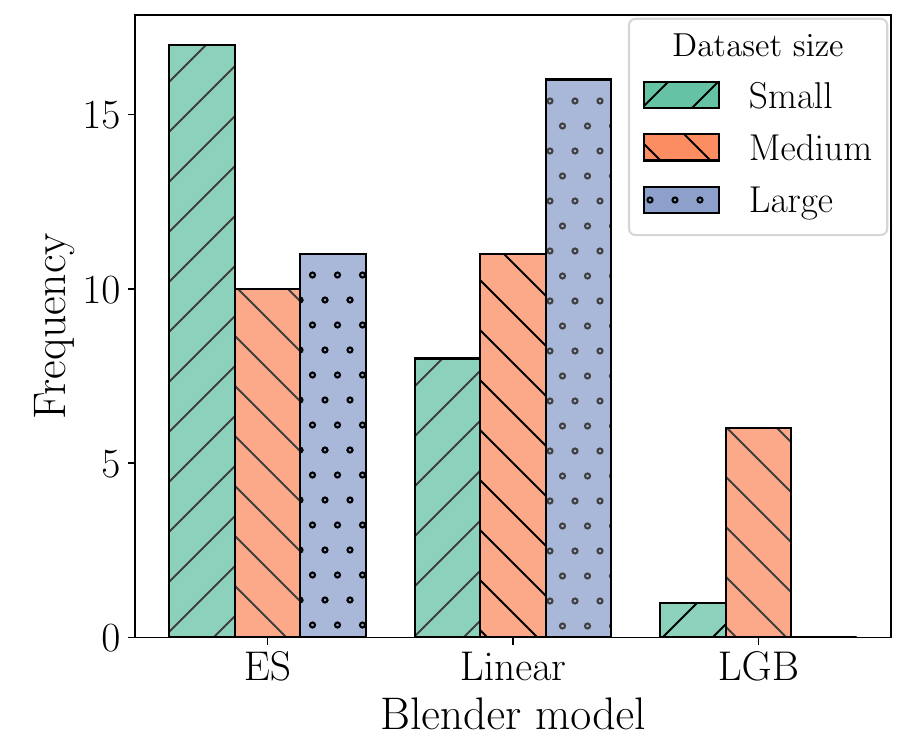}
}
\hspace{0.03\linewidth}
\subfigure[Optimal number of layers with optimal retain or not parameter.]{
\label{fig:layer_with_retain}
\includegraphics[width=0.3\linewidth]{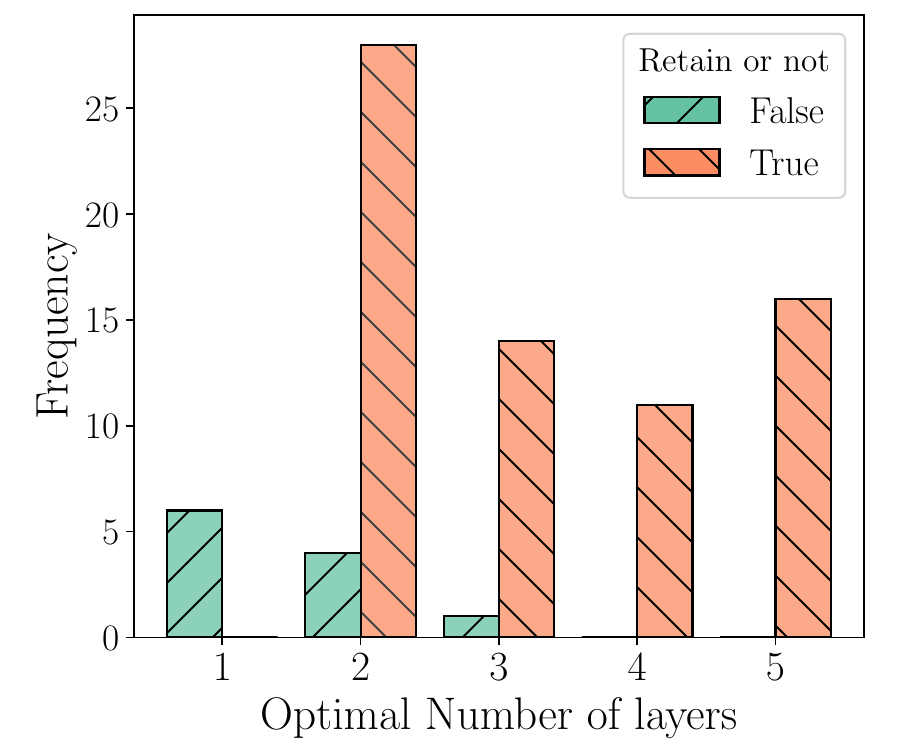}
}

\caption{Optimal Configuration Analysis.}

\label{fig:case_study}
% \vspace{1mm}
\end{figure*}

\section{Effectiveness of Dropout Mechanism}
\label{app:exp_effectiveness_of_dropout}
To highlight the effect of Dropout on the overfitting issue, Figure~\ref{fig:train_test_diff} shows the average normalized difference between training and test performance with dropout rate changing from 0 to 0.4 across 80 datasets.
Specifically, the normalized difference is the ratio of the absolute value of the difference between the training loss and test loss to the absolute value of the training loss: $|\mathcal{L}_{train} - \mathcal{L}_{test}| / |\mathcal{L}_{train}|$.
We observe that as the dropout rate increases, the gap between the training and test performance gradually narrows, indicating a reduction in overfitting.

\section{Effectiveness of Retain Mechanism}
\label{app:exp_effectiveness_of_retain}

To highlight the effect of predictive feature quality on ensemble performance, Figure~\ref{fig:avg_rank_of_output} uses a linear model as the blender and shows the average test rank of stacking with and without the Retain mechanism, as stacking depth increases. The results indicate that the Retain mechanism performs increasingly better, widening the gap over the baseline.

\begin{figure*}[tb]
% % \vspace{-3mm}
\centering

\subfigure[Proportion of max weight.]{
\includegraphics[width=0.47\linewidth]{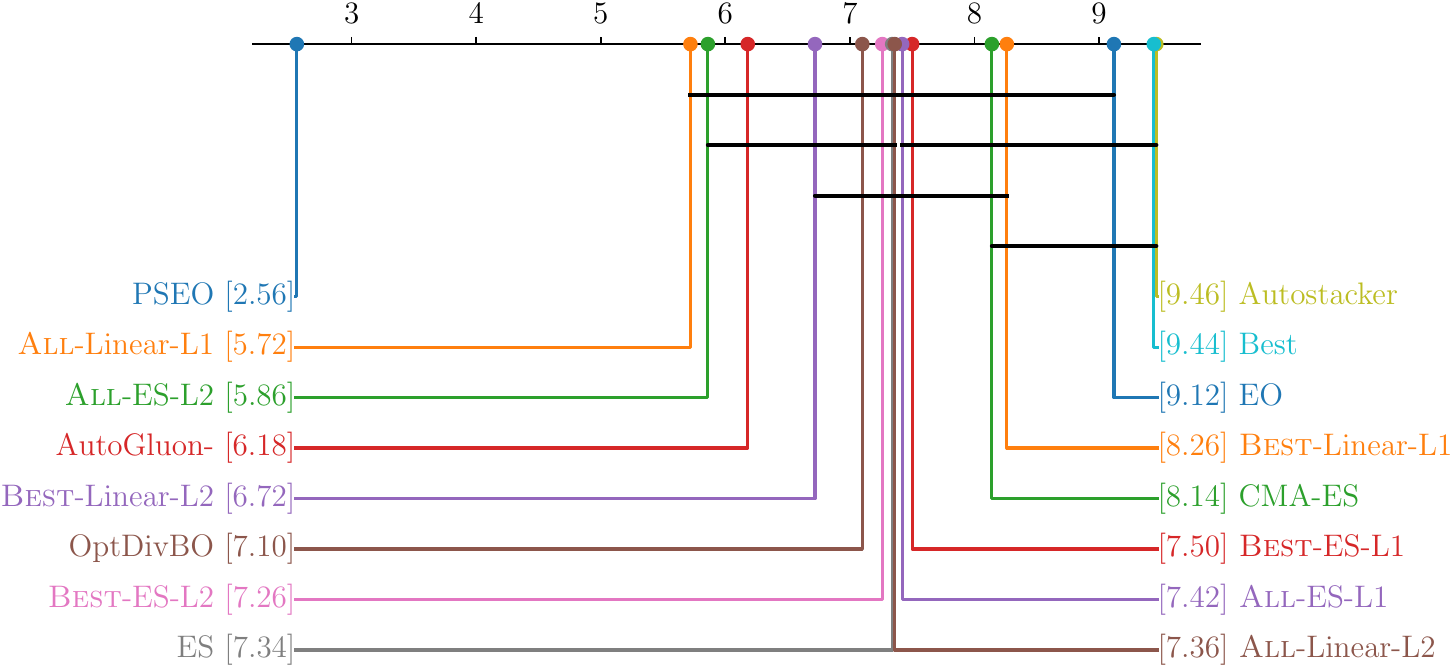}
\label{fig:cd}
}
\subfigure[Improvement of stackers.]{
\includegraphics[width=0.475\linewidth]{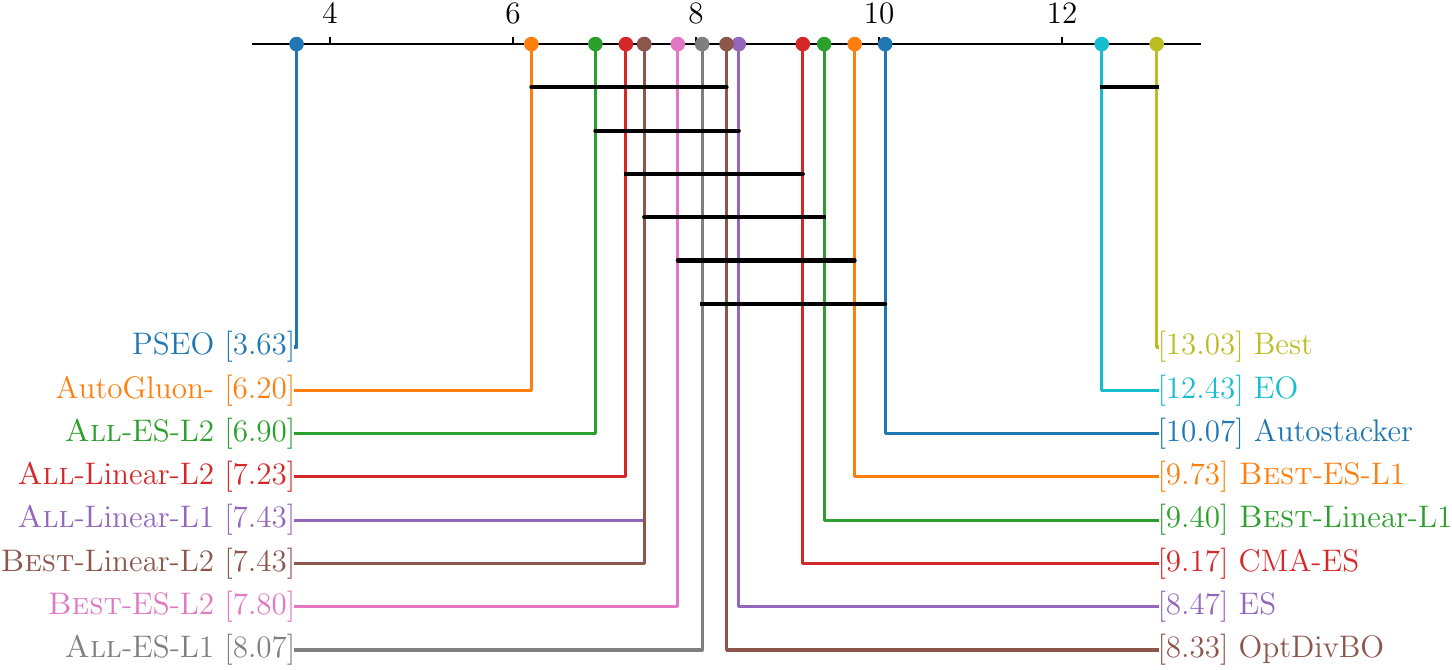}
\label{fig:cd_rgs}
}
\caption{Effectiveness of Dropout and Retain mechanisms.}
\label{}
% \vspace{-1em}
\end{figure*}

\section{Critical Difference Diagram}
Figure~\ref{fig:cd} shows the results of the Conover
post hoc test using critical difference plots, whereby a horizontal bar connects methods that are not significantly different.

\begin{table}[t]
    % \vspace{0mm}
    \centering
    % \fontsize{9}{\baselineskip}\selectfont
    \setlength{\tabcolsep}{1.2mm} % Set column separation to 1mm
    % \caption{Average rank and average total cost across different datasets.}
    
\begin{tabular}{lcccc}
\toprule
     & Single-Best & w/o Retain & w/o Dropout & \sys \\
    \midrule
        CLS & 3.24 & 2.04 & 1.84 & \textbf{1.46} \\
        REG & 3.67 & 2.30 & 2.27 & \textbf{1.70} \\
        ALL & 3.40 & 2.14 & 2.00 & \textbf{1.55} \\

    \bottomrule
\end{tabular}
  
\caption{Average test rank.}
\label{tab:aba_exp}
\end{table}

\section{Ablation Study}
\label{app:exp_ablation}
In this section, we present the effects of two mechanisms in \sys via an ablation study in Table~\ref{tab:aba_exp}.
We evaluate: 1) Single-Best; 2) w/o Retain: \sys without the Retain mechanism; 3) w/o Dropout: \sys without the Dropout mechanism; 4) \sys.
We further calculate the improvement rates of each method over Single-Best for each dataset.
The average improvements of w/o Retain, w/o Dropout, and \sys are 9.8\%, 7.7\%, and 11.0\%, respectively.
First, removing either mechanism leads to lower average rankings compared to the full \sys, and also results in reduced improvement over the Single-Best baseline.
Secondly, the ranking of w/o Retain is lower than that of w/o Dropout, but its improvement rate over Single-Best is higher. This suggests that the Dropout can elevate the upper bound of stacking performance on datasets prone to overfitting, while the Retain mechanism maintains the lower bound for most datasets.
To summarize, the two mechanisms of \sys show performance improvement. 
It indicates that the deep stacking framework of \sys is a reasonable design that provides a flexible framework adaptable to various tasks, along with optional strategies for addressing specific issues.

\section{Optimal Configuration Analysis}
\label{app:exp_optimal_conf_ana}

In this section, we analyze the optimal ensemble strategies identified by \sys across 80 ML tasks, aiming to provide insights for designing post-hoc ensemble strategies tailored to different tasks. Figure~\ref{fig:case_study} presents the results of five analyses, each of which will be detailed individually below.

1) We extracted the base model selection strategies from the optimal ensembles across 80 tasks to examine the relationship between optimal ensemble size and optimal diversity weight. The scatter plot is shown in Figure~\ref{fig:base_choose_with_tasktype}. 
We observe that \textbf{most optimal configurations have a high diversity weight} (greater than 0.25), indicating that diversity can enhance ensemble performance. Additionally, the absence of points in the lower right corner suggests that \textbf{large ensemble sizes coupled with low diversity weights are typically not optimal}, as this situation may lead to overfitting.

2) For the number of stacking layers, we plotted the distribution of optimal ensemble stacking layers separately for classification and regression tasks, as shown in Figure~\ref{fig:layer_with_tasktype}.
We observe that \textbf{the most common optimal ensemble stacking layer number is two}, particularly for regression tasks, with 17 out of 30 tasks having an optimal stacking of two layers, while the frequency for other layer counts does not exceed 5. In contrast, for classification tasks, the frequency of optimal stacking layers ranging from three to five is relatively closer to that of two. \textbf{This suggests that classification tasks have greater potential for deeper stacking, whereas regression tasks may experience instability and suboptimal performance with higher stacking layers}.

3) Regarding the choice of dropout rate, we plotted the relationship between the optimal dropout rate and the number of instances in the dataset for 80 tasks, as shown in Figure~\ref{fig:dropout_with_instance}. The red line represents the fitted linear trend. We can observe that, generally, as the dataset size increases, the optimal dropout rate tends to decrease. \textbf{This indicates that smaller datasets require the Dropout mechanism more}, as overfitting is more likely to occur in such tasks.

4) For the blender model, we sorted the 80 datasets by the number of instances and divided them into three categories: small datasets (less than 1.6k), medium datasets (2k to 8k), and large datasets (greater than 10k). We then plotted the distribution of optimal Blender models for each category, as shown in Figure~\ref{fig:blender_model_with_instance}.
We observe that \textbf{the LGB model is generally not the optimal choice}, with only 7 out of 80 tasks selecting it as the best blender model. \textbf{ES and Linear perform similarly overall, with ES showing a clear advantage on smaller datasets. As the dataset size increases, the performance of the Linear tends to improve}.

5) Regarding the Retain mechanism, we examine the relationship between whether Retain is used in the optimal configuration and the number of stacking layers. Figure~\ref{fig:layer_with_retain} shows the distribution of the optimal number of stacking layers where Retain is set to True and False, respectively.
We can conclude that the \textbf{Retain mechanism is recommended to be enabled}, as it is set to True in the optimal configuration for 69 out of 80 tasks. Additionally, \textbf{to achieve better ensemble performance with higher stacking layers, Retain is essential}, because when the optimal number of stacking layers exceeds one, Retain is almost always True.

\section{Test Performance Per Dataset Per Method}
\label{app:exp_performance_perdataset_permethod}
This section provides the exact performance of each method per dataset. 
Tables~\ref{tab:cls_results_details_part1}, \ref{tab:cls_results_details_part2}, and \ref{tab:cls_results_details_part3} present the test errors for all methods across 50 classification tasks, where test error equals $1-accuracy$. 
Tables~\ref{tab:rgs_results_details_part1}, \ref{tab:rgs_results_details_part2}, and \ref{tab:rgs_results_details_part3} show the test errors for all methods across 30 regression tasks, where test error equals the $mean\ squared\ error$~(MSE).

\section{Detailed Information about Reproducibility Checklist}
\label{app:detailed_checklist}
% In this section, we provided detailed information about each reproducibility criterion in the reproducibility checklist.

\begin{itemize}
\item 1. General Paper Structure
\item 1.1. Includes a conceptual outline and/or pseudocode description of AI methods introduced. [yes]. See Algorithm~\ref{alg:train_with_dropout_and_retain} in Appendix~\ref{app:pseudcode_of_training} and Algorithm~\ref{alg:algorithm} in Appendix~\ref{app:pseudcode_of_sys}.
\item 1.2. Clearly delineates statements that are opinions, hypothesis, and speculation from objective facts and results. [yes].
\item 1.3. Provides well-marked pedagogical references for less-familiar readers to gain background necessary to replicate the paper. [yes]. We have provided references to foundational and cutting-edge articles in the field, along with sources for all experimental data.
\item 2. Theoretical Contributions
\item 2.1. Does this paper make theoretical contributions? [yes]. See Theorem~\ref{the:decrease} and Theorem~\ref{the:expectation}.
\item 2.2. All assumptions and restrictions are stated clearly and formally. [yes]. See  Theorem~\ref{the:decrease}, Theorem~\ref{the:expectation} and Section~\ref{sec:deep_stacking_ensemble}.
\item 2.3. All novel claims are stated formally (e.g., in theorem statements). [yes]. See  Theorem~\ref{the:decrease} and Theorem~\ref{the:expectation}.
\item 2.4. Proofs of all novel claims are included. [yes]. See the proof in Appendix~\ref{app:proof}.
\item 2.5. Proof sketches or intuitions are given for complex and/or novel results. [yes]. See the proof sketch in Appendix~\ref{app:proof}.
\item 2.6. Appropriate citations to theoretical tools used are given. [yes].
\item 2.7. All theoretical claims are demonstrated empirically to hold. [yes].
\item 2.8. All experimental code used to eliminate or disprove claims is included. [yes].See the code in ``PSEO\_code/examples/benchmark\_ana\_dropout.py''.
\item 3. Dataset Usage
\item 3.1. Does this paper rely on one or more datasets? [yes]. We rely on 80 public datasets in our experiment in Section~\ref{sec:experiment}.
\item 3.2. A motivation is given for why the experiments are conducted on the selected datasets. [yes]. We provide the list of criteria for selecting datasets in Appendix~\ref{app:dataset_details}.
\item 3.3. All novel datasets introduced in this paper are included in a data appendix. [NA]. We haven't introduced novel datasets.
\item 3.4. All novel datasets introduced in this paper will be made publicly available upon publication of the paper with a license that allows free usage for research purposes. [NA]. We haven't introduced novel datasets.
\item 3.5. All datasets drawn from the existing literature (potentially including authors’ own previously published work) are accompanied by appropriate citations. [yes]. All the datasets are collected from OpenML~\cite{vanschoren2014openml}, and we provide the OpenML ID as the identification of the dataset.
\item 3.6. All datasets that are not publicly available are described in detail, with explanation why publicly available alternatives are not scientifically satisficing. [NA].All the datasets are publicly available in OpenML.
\item 4. Computational Experiments
\item 4.1. Does this paper include computational experiments. [yes]. See Section~\ref{sec:experiment}.
\item 4.2. This paper states the number and range of values tried per (hyper-) parameter during development of the paper, along with the criterion used for selecting the final parameter setting. [yes]. We decide the hyperparameters through tuning.
\item 4.3. Any code required for pre-processing data is included in the appendix. [yes]. See the code in ``PSEO\_code/examples/benchmark\_pseo.py'' file.
\item 4.4. All source code required for conducting and analyzing the experiments is included in a code appendix. [yes]. See the code in ``PSEO\_code/examples/benchmark\_pseo.py'' and ``PSEO\_code/examples/benchmark\_ana\_rank.py''.
\item 4.5. All source code required for conducting and analyzing the experiments will be made publicly available upon publication of the paper with a license
that allows free usage for research purposes. [yes]. We will open-source the code in GitHub upon publication.
\item 4.6. All source code implementing new methods have
comments detailing the implementation, with references to the paper where each step comes from. [yes]. See the code in ``PSEO\_code/pseo'' directory.
\item 4.7. If an algorithm depends on randomness, then the method used for setting seeds is described. [NA].
\item 4.8. This paper specifies the computing infrastructure
used for running experiments (hardware and software), including GPU/CPU models; amount of
memory; operating system; names and versions of
relevant software libraries and frameworks. [yes]. See Appendix~\ref{app:implementation_details}.
\item 4.9. This paper formally describes evaluation metrics used and explains the motivation for choosing these metrics. See Section~\ref{sec:experiment_setup}.
\item 4.10. This paper states the number of algorithm runs used to compute each reported result. [yes].
\item 4.11. Analysis of experiments goes beyond single dimensional summaries of performance (e.g., average; median) to include measures of variation, confidence, or other distributional information. [yes]. See Figure~\ref{fig:main_exp} and Section~\ref{sec:main_exp}.
\item 4.12. The significance of any improvement or decrease in performance is judged using appropriate statistical tests (e.g., Wilcoxon signed-rank). [yes]. See Section~\ref{sec:compare_with_autogluon}.
\item 4.13. This paper lists all final (hyper-)parameters used for each model/algorithm in the paper’s experiments. [yes]. See Appendix~\ref{app:implementation_details}.

\end{itemize}

\begin{table*}[t]
\centering
% \vspace{-1mm}
\begin{tabular}{lccccc}
\toprule
Datasets & Task Type & OpenML ID & Classes & Samples & Features \\
\midrule
fri\_c0\_1000\_5 & CLS & 609 & 2 & 1000 & 5\\
fri\_c2\_1000\_5 & CLS & 599 & 2 & 1000 & 5\\
fri\_c4\_1000\_10 & CLS & 623 & 2 & 1000 & 10\\
fri\_c1\_1000\_50 & CLS & 583 & 2 & 1000 & 50\\
credit & CLS & 46116 & 2 & 1000 & 9\\
fri\_c3\_1000\_10 & CLS & 608 & 2 & 1000 & 10\\
fri\_c0\_1000\_25 & CLS & 598 & 2 & 1000 & 25\\
credit-g & CLS & 31 & 2 & 1000 & 20\\
fri\_c3\_1000\_25 & CLS & 586 & 2 & 1000 & 25\\
rmftsa\_sleepdata(1) & CLS & 679 & 4 & 1024 & 2\\
pc1 & CLS & 1068 & 2 & 1109 & 21\\
colleges\_aaup & CLS & 488 & 2 & 1161 & 14\\
analcatdata\_halloffame & CLS & 454 & 2 & 1340 & 16\\
pc4 & CLS & 1049 & 2 & 1458 & 37\\
cmc & CLS & 23 & 3 & 1473 & 9\\
pc3 & CLS & 1050 & 2 & 1563 & 37\\
semeion & CLS & 1501 & 10 & 1593 & 256\\
winequality\_red & CLS & 42184 & 6 & 1599 & 11\\
mfeat-morphological(1) & CLS & 18 & 10 & 2000 & 6\\
mfeat-morphological(2) & CLS & 962 & 2 & 2000 & 6\\
mfeat-karhunen(2) & CLS & 1020 & 2 & 2000 & 64\\
mfeat-karhunen(1) & CLS & 16 & 10 & 2000 & 64\\
mfeat-fourier(1) & CLS & 14 & 10 & 2000 & 76\\
balloon & CLS & 512 & 2 & 2001 & 1\\
kc1 & CLS & 1067 & 2 & 2109 & 21\\
quake & CLS & 209 & 2 & 2178 & 3\\
space\_ga & CLS & 507 & 2 & 3107 & 6\\
splice & CLS & 46 & 3 & 3190 & 60\\
kr-vs-kp & CLS & 3 & 2 & 3196 & 36\\
hypothyroid(2) & CLS & 1000 & 2 & 3772 & 29\\
analcatdata\_supreme & CLS & 504 & 2 & 4052 & 7\\
waveform-5000(2) & CLS & 979 & 2 & 5000 & 40\\
delta\_ailerons & CLS & 803 & 2 & 7129 & 5\\
isolet & CLS & 300 & 26 & 7797 & 617\\
puma32H & CLS & 308 & 2 & 8192 & 32\\
cpu\_small & CLS & 227 & 2 & 8192 & 12\\
puma8NH & CLS & 225 & 2 & 8192 & 8\\
cpu\_act & CLS & 197 & 2 & 8192 & 21\\
jm1 & CLS & 1053 & 2 & 10885 & 21\\
sylva\_prior & CLS & 1040 & 2 & 14395 & 108\\
letter(2) & CLS & 977 & 2 & 20000 & 16\\
house\_16H & CLS & 574 & 2 & 22784 & 16\\
kropt & CLS & 184 & 18 & 28056 & 6\\
adult-census & CLS & 1119 & 2 & 32561 & 14\\
amazon\_employee & CLS & 4135 & 2 & 32769 & 9\\
mv & CLS & 344 & 2 & 40768 & 10\\
adult & CLS & 179 & 2 & 48842 & 14\\
mnist\_784 & CLS & 554 & 10 & 70000 & 784\\
vehicle\_sensIT & CLS & 357 & 2 & 98528 & 100\\
covertype & CLS & 180 & 7 & 110393 & 54\\
\bottomrule
\end{tabular}
\caption{Basic information of classification dataset.}
\label{tab:cls_dataset}
\end{table*}

\begin{table*}[t]
\centering
% \vspace{-1mm}
\begin{tabular}{lccccc}
\toprule
Datasets & Task Type & OpenML ID & Samples & Features \\
\midrule
strikes & REG & 549 & 625 & 6\\
disclosure\_x\_noise & REG & 704 & 662 & 3\\
disclosure\_z & REG & 699 & 662 & 3\\
stock & REG & 223 & 950 & 9\\
socmob & REG & 541 & 1156 & 5\\
Moneyball & REG & 41021 & 1232 & 14\\
insurance & REG & 43463 & 1338 & 6\\
weather\_izmir & REG & 42369 & 1461 & 9\\
us\_crime & REG & 42730 & 1994 & 126\\
debutanizer & REG & 23516 & 2394 & 7\\
space\_ga & REG & 507 & 3107 & 6\\
mtp & REG & 405 & 4450 & 202\\
wind & REG & 503 & 6574 & 14\\
bank32nh & REG & 558 & 8192 & 32\\
bank8FM & REG & 572 & 8192 & 8\\
cpu\_act & REG & 197 & 8192 & 21\\
cpu\_small & REG & 227 & 8192 & 12\\
kin8nm & REG & 189 & 8192 & 8\\
puma32H & REG & 308 & 8192 & 32\\
puma8NH & REG & 225 & 8192 & 8\\
sulfur & REG & 23515 & 10081 & 6\\
rainfall\_bangladesh & REG & 41539 & 16755 & 3\\
kc\_house\_data & REG & 42079 & 21613 & 18\\
house\_8L & REG & 218 & 22784 & 8\\
NewFuelCar & REG & 41506 & 36203 & 17\\
electricity\_prices\_ICON & REG & 1168 & 38014 & 16\\
OnlineNewsPopularity & REG & 42724 & 39644 & 59\\
2dplanes & REG & 215 & 40768 & 10\\
mv & REG & 344 & 40768 & 10\\
black\_friday & REG & 41540 & 166821 & 9\\
\bottomrule
\end{tabular}
\caption{Basic information of regression dataset.}
\label{tab:reg_dataset}
\end{table*}

\begin{table*}[t]
\fontsize{9}{11}\selectfont
\centering
\begin{tabular}{lccccccc}
\toprule
Datasets & Single-Best & EO & Autostacker & OptDivBO & ES & CMAES & $\textsc{All}$-ES-L1\\
\midrule
fri\_c0\_1000\_5 & 9.5000 & 10.0000 & 10.0000 & 10.0000 & 10.5000 & 9.5000 & 9.0000 \\
fri\_c2\_1000\_5 & 6.5000 & 8.0000 & 7.5000 & 6.0000 & 6.0000 & \textbf{5.5000} & 6.5000 \\
fri\_c4\_1000\_10 & \textbf{4.5000} & 7.0000 & 12.0000 & 8.5000 & 7.5000 & \textbf{4.5000} & 9.5000 \\
fri\_c1\_1000\_50 & 7.0000 & 7.5000 & 4.5000 & 5.0000 & 6.0000 & 7.0000 & 5.5000 \\
credit & 28.5000 & 28.5000 & 29.5000 & 28.0000 & 28.0000 & 28.5000 & \textbf{25.5000} \\
fri\_c3\_1000\_10 & 13.0000 & \underline{9.0000} & 11.5000 & \underline{9.0000} & 10.5000 & 11.0000 & 10.5000 \\
fri\_c0\_1000\_25 & 11.0000 & 11.5000 & 9.0000 & 8.5000 & 10.0000 & 11.0000 & 9.0000 \\
credit-g & 24.5000 & 22.0000 & 22.0000 & 23.0000 & \textbf{19.5000} & 21.0000 & 20.5000 \\
fri\_c3\_1000\_25 & 6.5000 & 6.0000 & 3.0000 & 6.5000 & 6.5000 & 6.5000 & 8.0000 \\
rmftsa\_sleepdata(1) & 50.2439 & 46.8293 & 53.1707 & 46.3415 & 48.2927 & 48.7805 & 46.8293 \\
pc1 & \underline{5.8559} & 6.3063 & 7.2072 & 6.7568 & \underline{5.8559} & \underline{5.8559} & 6.7568 \\
colleges\_aaup & 1.2876 & \textbf{0.8584} & \textbf{0.8584} & \textbf{0.8584} & 1.2876 & 1.2876 & 1.2876 \\
analcatdata\_halloffame & 4.8507 & \textbf{3.7313} & 5.2239 & 4.4776 & 4.8507 & 4.8507 & 4.4776 \\
pc4 & \textbf{7.8767} & 8.2192 & 11.3014 & 11.6438 & 9.9315 & \textbf{7.8767} & 9.5890 \\
cmc & 42.7119 & 48.8136 & 43.3898 & 42.3729 & \underline{41.6949} & 42.7119 & 42.0339 \\
pc3 & 10.2236 & 10.2236 & 10.5431 & 10.5431 & 10.2236 & 10.2236 & 10.2236 \\
semeion & \underline{3.7618} & \underline{3.7618} & 4.7022 & 4.3887 & \underline{3.7618} & 5.0157 & 4.7022 \\
winequality\_red & 29.0625 & 31.5625 & 30.6250 & 31.2500 & 28.7500 & \textbf{27.8125} & 29.3750 \\
mfeat-morphological(1) & 26.0000 & 27.0000 & 21.5000 & 21.0000 & 25.5000 & 26.0000 & 25.5000 \\
mfeat-morphological(2) & \textbf{0.2500} & \textbf{0.2500} & \textbf{0.2500} & \textbf{0.2500} & \textbf{0.2500} & \textbf{0.2500} & \textbf{0.2500} \\
mfeat-karhunen(2) & 0.5000 & 0.2500 & 0.2500 & 0.2500 & \textbf{0.0000} & 0.5000 & \textbf{0.0000} \\
mfeat-karhunen(1) & \underline{2.2500} & \underline{2.2500} & 2.5000 & \underline{2.2500} & 3.0000 & \underline{2.2500} & 3.2500 \\
mfeat-fourier(1) & 17.5000 & 16.5000 & 15.7500 & 14.5000 & 17.0000 & 17.5000 & 17.5000 \\
balloon & \textbf{14.2145} & \textbf{14.2145} & \textbf{14.2145} & \textbf{14.2145} & \textbf{14.2145} & \textbf{14.2145} & \textbf{14.2145} \\
kc1 & 13.5071 & 14.4550 & 12.5592 & 12.5592 & \textbf{12.0853} & 13.5071 & 12.3223 \\
quake & 49.3119 & 46.1009 & 49.5413 & 48.3945 & 48.6239 & 45.6422 & \underline{45.4128} \\
space\_ga & 13.8264 & 17.2026 & 12.5402 & 12.5402 & 13.3441 & 14.1479 & 12.0579 \\
splice & 3.9185 & 3.9185 & 3.6050 & 4.2320 & 3.4483 & 3.9185 & 3.7618 \\
kr-vs-kp & 0.6250 & 0.6250 & 0.7812 & 0.7812 & \textbf{0.4687} & 0.6250 & 0.6250 \\
hypothyroid(2) & \textbf{0.2649} & \textbf{0.2649} & \textbf{0.2649} & \textbf{0.2649} & \textbf{0.2649} & \textbf{0.2649} & \textbf{0.2649} \\
analcatdata\_supreme & 0.6165 & \textbf{0.4932} & 0.7398 & 0.6165 & \textbf{0.4932} & 0.6165 & 0.7398 \\
waveform-5000(2) & 10.9000 & 9.9000 & 18.3000 & 9.5000 & 10.1000 & 9.6000 & \textbf{9.1000} \\
delta\_ailerons & 6.0309 & \textbf{5.7504} & 6.0309 & \underline{5.8906} & 6.2412 & 6.0309 & 5.9607 \\
isolet & 3.0769 & 3.1410 & 2.9487 & 2.8205 & 3.0769 & 3.0128 & 3.0769 \\
puma32H & 8.6028 & 12.3246 & 8.0537 & 8.2977 & 9.3350 & 8.7248 & 8.7858 \\
cpu\_small & 7.3826 & \textbf{7.0775} & 7.7486 & 7.9317 & 7.5656 & 7.5046 & 7.5046 \\
puma8NH & 17.2056 & 17.9988 & 16.7175 & 17.1446 & 16.7175 & 17.5717 & 16.9616 \\
cpu\_act & 6.8944 & 6.4674 & 6.5284 & 6.4674 & 6.3453 & 6.8944 & 6.5284 \\
jm1 & 18.1442 & 18.0524 & 17.9146 & 18.1442 & 17.2715 & 17.7308 & 17.8227 \\
sylva\_prior & 0.5210 & 0.4515 & 0.5210 & 0.4168 & 0.5210 & 0.5210 & 0.5210 \\
letter(2) & 0.1000 & 0.0750 & 0.0500 & \underline{0.0250} & \underline{0.0250} & 0.0500 & \underline{0.0250} \\
house\_16H & 10.4016 & 10.0505 & 10.0066 & 9.6774 & 9.7652 & 10.0944 & 9.8310 \\
kropt & \underline{7.2701} & 12.0991 & 9.1411 & 8.5709 & 7.8582 & 7.2880 & 14.9679 \\
adult-census & 13.2197 & 15.7070 & 13.0355 & 13.1890 & 13.3272 & \underline{12.8512} & 12.9894 \\
amazon\_employee & 4.8062 & 5.2487 & 4.6689 & 4.6842 & 4.6689 & 4.8062 & 4.8062 \\
mv & 0.0491 & 0.1104 & 0.0858 & 0.0613 & 0.1104 & 0.0491 & 0.0368 \\
adult & 13.9114 & 14.4846 & 13.9114 & 14.0035 & 13.8704 & \textbf{13.7271} & 13.9728 \\
mnist\_784 & 1.8286 & 1.8286 & 2.0500 & 1.3714 & 1.6214 & 1.5000 & 1.5357 \\
vehicle\_sensIT & 12.9605 & 12.9605 & 12.5444 & \underline{12.3566} & 12.9402 & 12.9554 & 12.9301 \\
covertype & 11.5766 & 11.1871 & 9.1852 & \underline{9.1308} & 10.1363 & 9.9733 & 11.5766 \\
\bottomrule
\end{tabular}
\caption{Test Error(\%) - CLS(Part I): The test score for each method. The best methods per dataset are shown in bold, while the second-best methods are underlined.}
\label{tab:cls_results_details_part1}
\end{table*}
\begin{table*}[t]
\fontsize{9}{11}\selectfont
\centering
\begin{tabular}{lcccccc}
\toprule
Datasets & $\textsc{All}$-ES-L2 & $\textsc{All}$-Linear-L1 & $\textsc{All}$-Linear-L2 & $\textsc{Best}$-ES-L1 & $\textsc{Best}$-ES-L2 & $\textsc{Best}$-Linear-L1\\
\midrule
fri\_c0\_1000\_5 & 9.0000 & \textbf{8.5000} & 9.5000 & 12.0000 & 10.5000 & 12.0000 \\
fri\_c2\_1000\_5 & 6.0000 & 6.0000 & 6.5000 & 6.5000 & 7.0000 & 7.5000 \\
fri\_c4\_1000\_10 & 9.0000 & 5.5000 & 9.5000 & 8.5000 & 9.5000 & 9.5000 \\
fri\_c1\_1000\_50 & \underline{3.5000} & \textbf{3.0000} & 4.5000 & 4.5000 & 4.0000 & 5.0000 \\
credit & 27.5000 & \underline{26.5000} & 31.5000 & 27.0000 & 29.5000 & 28.5000 \\
fri\_c3\_1000\_10 & \textbf{7.5000} & \underline{9.0000} & 11.5000 & 12.0000 & 10.0000 & 10.5000 \\
fri\_c0\_1000\_25 & 9.0000 & \textbf{8.0000} & \textbf{8.0000} & 9.0000 & 9.0000 & 8.5000 \\
credit-g & 20.0000 & 21.0000 & 21.0000 & 20.5000 & 24.0000 & 20.0000 \\
fri\_c3\_1000\_25 & \underline{1.5000} & 4.5000 & 4.5000 & 6.5000 & 5.5000 & 5.5000 \\
rmftsa\_sleepdata(1) & 47.8049 & \textbf{45.3659} & 49.2683 & 46.8293 & 47.8049 & 46.8293 \\
pc1 & 8.5586 & 7.6577 & 7.2072 & 6.3063 & \underline{5.8559} & 6.7568 \\
colleges\_aaup & \textbf{0.8584} & 1.2876 & 1.2876 & 1.2876 & 1.2876 & 1.7167 \\
analcatdata\_halloffame & 4.8507 & \underline{4.1045} & 4.4776 & 4.8507 & 4.4776 & 4.8507 \\
pc4 & 9.5890 & 10.6164 & 9.5890 & 9.2466 & 9.5890 & 9.5890 \\
cmc & 43.7288 & 43.3898 & 46.7797 & 42.0339 & 44.0678 & \textbf{40.0000} \\
pc3 & 10.5431 & 10.5431 & 9.9042 & 10.2236 & \underline{9.5847} & 10.5431 \\
semeion & 4.3887 & 4.7022 & 4.3887 & 4.0752 & 4.3887 & \textbf{3.4483} \\
winequality\_red & 30.0000 & 32.8125 & 33.4375 & \underline{28.1250} & 31.2500 & 29.0625 \\
mfeat-morphological(1) & \underline{19.2500} & 20.5000 & 22.7500 & 25.7500 & 21.2500 & 21.2500 \\
mfeat-morphological(2) & \textbf{0.2500} & \textbf{0.2500} & \textbf{0.2500} & \textbf{0.2500} & \textbf{0.2500} & \textbf{0.2500} \\
mfeat-karhunen(2) & 0.2500 & 0.2500 & 0.2500 & 0.2500 & \textbf{0.0000} & 0.2500 \\
mfeat-karhunen(1) & 2.5000 & \underline{2.2500} & 2.7500 & \underline{2.2500} & \underline{2.2500} & 3.0000 \\
mfeat-fourier(1) & 17.5000 & \underline{12.2500} & 13.5000 & 17.5000 & 17.5000 & 12.7500 \\
balloon & 15.2120 & \textbf{14.2145} & \textbf{14.2145} & \textbf{14.2145} & \textbf{14.2145} & \textbf{14.2145} \\
kc1 & \textbf{12.0853} & \textbf{12.0853} & 12.5592 & \textbf{12.0853} & 12.3223 & 12.3223 \\
quake & 47.7064 & 49.3119 & \underline{45.4128} & \textbf{44.7248} & 49.3119 & \underline{45.4128} \\
space\_ga & 12.8617 & 12.2186 & \textbf{11.5756} & 12.7010 & 12.7010 & 13.0225 \\
splice & \underline{3.2915} & \textbf{2.8213} & \underline{3.2915} & 3.9185 & 3.7618 & 3.4483 \\
kr-vs-kp & 0.6250 & 0.6250 & 0.6250 & 0.6250 & \textbf{0.4687} & 0.6250 \\
hypothyroid(2) & \textbf{0.2649} & \textbf{0.2649} & \textbf{0.2649} & \textbf{0.2649} & \textbf{0.2649} & 0.3974 \\
analcatdata\_supreme & 0.7398 & 0.7398 & 0.7398 & 0.7398 & 0.7398 & 0.7398 \\
waveform-5000(2) & \textbf{9.1000} & 10.1000 & 9.5000 & 9.9000 & 9.9000 & 10.1000 \\
delta\_ailerons & 5.9607 & 6.1010 & 6.3815 & \underline{5.8906} & 5.9607 & 5.9607 \\
isolet & 2.9487 & 2.9487 & \underline{2.6923} & 3.0128 & \textbf{2.6282} & 3.2692 \\
puma32H & 7.8707 & \textbf{7.1385} & 7.8707 & 8.4198 & 7.8096 & 8.9689 \\
cpu\_small & 7.4436 & 7.5046 & 7.5046 & 7.3215 & 7.6876 & 7.5656 \\
puma8NH & \underline{16.6565} & \textbf{16.4735} & 17.2056 & 16.9005 & 16.8395 & 16.8395 \\
cpu\_act & 6.2843 & \textbf{5.7962} & 6.1013 & 6.5894 & 6.1623 & 6.7724 \\
jm1 & 17.5011 & 17.0877 & 18.0983 & \textbf{16.8581} & 17.8227 & \underline{17.0418} \\
sylva\_prior & \textbf{0.3821} & \textbf{0.3821} & 0.4515 & 0.5557 & 0.4863 & 0.4515 \\
letter(2) & \underline{0.0250} & 0.0500 & \underline{0.0250} & 0.0500 & 0.0500 & \textbf{0.0000} \\
house\_16H & 9.9408 & \textbf{9.5458} & 9.9627 & 9.9408 & 9.9846 & 9.7871 \\
kropt & 8.3393 & 11.0478 & 8.7847 & 14.8076 & 10.4419 & 13.5424 \\
adult-census & 12.8819 & 13.0508 & 13.0815 & 13.1122 & 13.0355 & 13.2044 \\
amazon\_employee & 4.6384 & \textbf{4.5468} & 4.7910 & 4.8520 & 4.7147 & 4.7605 \\
mv & 0.0368 & 0.1226 & \underline{0.0245} & 0.0858 & 0.0368 & 0.0736 \\
adult & 13.9625 & 13.8806 & 14.6279 & 13.9318 & 13.8806 & 14.0035 \\
mnist\_784 & 1.4071 & \underline{1.3643} & 1.4000 & 1.5857 & 1.5643 & 1.5929 \\
vehicle\_sensIT & 12.4480 & 12.8438 & 12.4023 & 12.8540 & 12.5647 & 13.1534 \\
covertype & 11.5766 & 12.7678 & 11.4362 & 11.5766 & 11.5766 & 9.6562 \\
\bottomrule
\end{tabular}
\caption{Test Error(\%) - CLS(Part II): The test score for each method. The best methods per dataset are shown in bold, while the second-best methods are underlined.}
\label{tab:cls_results_details_part2}
\end{table*}
\begin{table*}[t]
\fontsize{9}{11}\selectfont
\centering
\begin{tabular}{lccc}
\toprule
Datasets & $\textsc{Best}$-Linear-L2 & AutoGluon- & \sys\\
\midrule
fri\_c0\_1000\_5 & 11.0000 & 9.0000 & \textbf{8.5000} \\
fri\_c2\_1000\_5 & 6.5000 & 6.3595 & \textbf{5.5000} \\
fri\_c4\_1000\_10 & 8.0000 & 9.3736 & 7.5000 \\
fri\_c1\_1000\_50 & \underline{3.5000} & 5.4807 & \underline{3.5000} \\
credit & 29.0000 & 26.9174 & 27.5000 \\
fri\_c3\_1000\_10 & 11.0000 & 9.6385 & \underline{9.0000} \\
fri\_c0\_1000\_25 & 9.0000 & 9.0000 & 8.5000 \\
credit-g & 22.5000 & 19.9686 & \textbf{19.5000} \\
fri\_c3\_1000\_25 & 3.5000 & \textbf{0.0000} & 4.0000 \\
rmftsa\_sleepdata(1) & 48.2927 & 47.7064 & \underline{45.8537} \\
pc1 & \textbf{5.4054} & 8.3585 & \underline{5.8559} \\
colleges\_aaup & 1.2876 & 1.2184 & \textbf{0.8584} \\
analcatdata\_halloffame & 4.8507 & 4.4593 & 4.4776 \\
pc4 & 9.5890 & 9.5890 & 9.5890 \\
cmc & 42.3729 & 41.9174 & 42.0339 \\
pc3 & \textbf{8.9457} & 10.4732 & 9.9042 \\
semeion & 4.0752 & 4.6679 & \underline{3.7618} \\
winequality\_red & 31.5625 & 29.3248 & 29.0625 \\
mfeat-morphological(1) & 20.2500 & \textbf{19.1617} & 20.5000 \\
mfeat-morphological(2) & \textbf{0.2500} & \textbf{0.2500} & \textbf{0.2500} \\
mfeat-karhunen(2) & \textbf{0.0000} & \textbf{0.0000} & 0.2500 \\
mfeat-karhunen(1) & 2.5000 & 3.0563 & \textbf{2.0000} \\
mfeat-fourier(1) & \textbf{11.7500} & 17.5000 & \underline{12.2500} \\
balloon & \textbf{14.2145} & 15.1321 & \textbf{14.2145} \\
kc1 & 12.3223 & 12.3215 & \textbf{12.0853} \\
quake & 47.2477 & 47.6466 & 46.5596 \\
space\_ga & 12.3794 & 12.7126 & \underline{11.8971} \\
splice & 3.6050 & 3.6826 & \underline{3.2915} \\
kr-vs-kp & 0.6250 & 0.6250 & 0.6250 \\
hypothyroid(2) & \textbf{0.2649} & \textbf{0.2649} & \textbf{0.2649} \\
analcatdata\_supreme & 0.7398 & 0.7398 & 0.6165 \\
waveform-5000(2) & 9.7000 & \textbf{9.1000} & 9.4000 \\
delta\_ailerons & 5.9607 & 5.9607 & \underline{5.8906} \\
isolet & 2.9487 & 2.9125 & \underline{2.6923} \\
puma32H & 8.5418 & 7.8487 & \underline{7.5046} \\
cpu\_small & 7.6876 & 7.4394 & \underline{7.1995} \\
puma8NH & \underline{16.6565} & 16.9588 & \underline{16.6565} \\
cpu\_act & 6.5284 & 6.4814 & \textbf{5.7962} \\
jm1 & 17.7768 & 17.4707 & 17.5011 \\
sylva\_prior & 0.4515 & 0.4925 & \textbf{0.3821} \\
letter(2) & 0.0500 & \underline{0.0250} & \underline{0.0250} \\
house\_16H & 9.8969 & 9.8224 & \underline{9.6555} \\
kropt & 10.9230 & \textbf{7.1768} & 8.6778 \\
adult-census & 13.1276 & 12.8646 & \textbf{12.7284} \\
amazon\_employee & 4.7452 & 4.6271 & \underline{4.5926} \\
mv & 0.0368 & 0.0368 & \textbf{0.0123} \\
adult & 13.9011 & 13.9622 & \underline{13.8499} \\
mnist\_784 & 1.4429 & 1.3976 & \textbf{1.3286} \\
vehicle\_sensIT & 12.7220 & 12.4078 & \textbf{12.3211} \\
covertype & 9.5792 & 11.5766 & \textbf{9.0991} \\
\bottomrule
\end{tabular}
\caption{Test Error(\%) - CLS(Part III): The test score for each method. The best methods per dataset are shown in bold, while the second-best methods are underlined.}
\label{tab:cls_results_details_part3}
\end{table*}
\begin{table*}[t]
\fontsize{9}{11}\selectfont
\centering
\begin{tabular}{lccccccc}
\toprule
Datasets & Single-Best & EO & Autostacker & OptDivBO & ES & CMAES & $\textsc{All}$-ES-L1\\
\midrule
strikes & 3.4118e+05 & 3.4072e+05 & 3.3048e+05 & 3.3038e+05 & 3.3111e+05 & \textbf{3.0473e+05} & 3.2037e+05 \\
disclosure\_x\_noise & 8.4511e+08 & 8.4815e+08 & 8.5393e+08 & 8.5419e+08 & 8.7719e+08 & 8.8265e+08 & 8.2596e+08 \\
disclosure\_z & 6.2967e+08 & 5.9922e+08 & 6.7199e+08 & 6.2887e+08 & 7.6631e+08 & 7.6327e+08 & 6.2287e+08 \\
stock & 2.7339e-01 & 3.0387e-01 & 3.2874e-01 & 3.0445e-01 & 2.9844e-01 & 3.0871e-01 & 3.1732e-01 \\
socmob & \textbf{1.0809e+02} & \underline{1.0937e+02} & 2.6734e+02 & 1.7550e+02 & 1.1101e+02 & 1.1292e+02 & 1.7066e+02 \\
Moneyball & 4.5700e+02 & 8.2509e+02 & 4.6171e+02 & 4.5645e+02 & \textbf{4.3366e+02} & \underline{4.4980e+02} & 4.5700e+02 \\
insurance & 1.8019e+07 & 1.8779e+07 & 1.7728e+07 & \underline{1.7609e+07} & 1.8966e+07 & 1.9077e+07 & 1.8069e+07 \\
weather\_izmir & 1.4356e+00 & 1.6041e+00 & 1.4121e+00 & 1.3884e+00 & 1.3880e+00 & 1.3947e+00 & \textbf{1.3697e+00} \\
us\_crime & 1.8158e-02 & 1.7748e-02 & 1.8093e-02 & 1.8443e-02 & \textbf{1.7185e-02} & \underline{1.7254e-02} & 1.7655e-02 \\
debutanizer & 4.3706e-03 & 3.6395e-03 & 3.5604e-03 & 3.7100e-03 & 3.8771e-03 & 3.8058e-03 & 3.9273e-03 \\
space\_ga & 1.0894e-02 & 9.5217e-03 & 8.7566e-03 & 8.7410e-03 & 9.0999e-03 & 9.5045e-03 & 8.7747e-03 \\
mtp & 1.2382e-02 & 1.2274e-02 & 1.1829e-02 & 1.2209e-02 & 1.1864e-02 & 1.2054e-02 & 1.1889e-02 \\
wind & 8.7384e+00 & 8.7125e+00 & 8.2138e+00 & 8.2040e+00 & 8.2374e+00 & 8.3029e+00 & 8.1847e+00 \\
bank32nh & 6.2034e-03 & 6.1807e-03 & 5.8875e-03 & 6.0119e-03 & 5.7367e-03 & 5.6970e-03 & 5.7160e-03 \\
bank8FM & 9.3215e-04 & 1.0966e-03 & 8.8017e-04 & 8.8556e-04 & 9.0094e-04 & 9.0198e-04 & 8.8128e-04 \\
cpu\_act & 6.9071e+00 & 6.1408e+00 & 5.5837e+00 & 4.8873e+00 & 5.4768e+00 & 5.2258e+00 & 5.4218e+00 \\
cpu\_small & 8.3812e+00 & 8.6239e+00 & 7.9154e+00 & 7.9213e+00 & 7.6868e+00 & 7.8099e+00 & 7.7066e+00 \\
kin8nm & 7.1055e-03 & 9.8226e-03 & 5.0617e-03 & \underline{4.9591e-03} & 6.2153e-03 & 6.8652e-03 & 6.3535e-03 \\
puma32H & 5.0607e-05 & 6.1511e-05 & 4.8155e-05 & \textbf{4.7037e-05} & 4.9707e-05 & 4.9512e-05 & 4.8388e-05 \\
puma8NH & 1.0025e+01 & 1.0540e+01 & 9.9420e+00 & 9.8439e+00 & 9.8239e+00 & 9.8729e+00 & 9.8330e+00 \\
sulfur & 7.4291e-04 & \underline{6.2906e-04} & 8.6869e-04 & 9.2979e-04 & \textbf{6.0697e-04} & 6.5443e-04 & 6.7434e-04 \\
rainfall\_bangladesh & 1.5629e+04 & 1.5449e+04 & 1.4735e+04 & \underline{1.4692e+04} & 1.5351e+04 & 1.5342e+04 & 1.5270e+04 \\
kc\_house\_data & 1.9822e+10 & 1.8827e+10 & 2.0677e+10 & \underline{1.8813e+10} & 1.9452e+10 & 1.9954e+10 & 1.9868e+10 \\
house\_8L & 7.7687e+08 & 7.6990e+08 & 7.4949e+08 & 7.7509e+08 & 7.4441e+08 & 7.5221e+08 & 7.4307e+08 \\
NewFuelCar & 6.4424e-02 & 6.5676e-02 & 5.7260e-02 & 5.6136e-02 & 5.7617e-02 & 5.7621e-02 & 5.7079e-02 \\
electricity\_prices\_ICON & 4.3738e+02 & 4.2373e+02 & 4.0376e+02 & 3.8563e+02 & 4.0931e+02 & 4.0980e+02 & 4.1112e+02 \\
OnlineNewsPopularity & 6.7930e+07 & 6.7818e+07 & 6.7858e+07 & 6.7858e+07 & 6.7661e+07 & 6.7636e+07 & 6.7509e+07 \\
2dplanes & 9.9194e-01 & 4.3553e+00 & 9.9361e-01 & 9.9224e-01 & \textbf{9.9162e-01} & \underline{9.9165e-01} & 9.9206e-01 \\
mv & 6.5792e-03 & 6.5792e-03 & 1.7953e-03 & \underline{1.7268e-03} & 2.9186e-03 & 3.9138e-03 & 2.2636e-03 \\
black\_friday & 1.1918e+07 & 1.1918e+07 & 1.1895e+07 & 1.1817e+07 & 1.1833e+07 & 1.1831e+07 & \underline{1.1817e+07} \\
\bottomrule
\end{tabular}
\caption{Test Error - REG(Part I): The test score for each method. The best methods per dataset are shown in bold, while the second-best methods are underlined.}
\label{tab:rgs_results_details_part1}
\end{table*}
\begin{table*}[t]
\fontsize{9}{11}\selectfont
\centering
\begin{tabular}{lcccccc}
\toprule
Datasets & $\textsc{All}$-ES-L2 & $\textsc{All}$-Linear-L1 & $\textsc{All}$-Linear-L2 & $\textsc{Best}$-ES-L1 & $\textsc{Best}$-ES-L2 & $\textsc{Best}$-Linear-L1\\
\midrule
strikes & 3.1170e+05 & \underline{3.0639e+05} & 1.0279e+06 & 3.2392e+05 & 3.2338e+05 & 3.2621e+05 \\
disclosure\_x\_noise & 8.6024e+08 & \textbf{8.1313e+08} & 8.6291e+08 & \underline{8.2466e+08} & 8.5209e+08 & 8.2637e+08 \\
disclosure\_z & 5.9914e+08 & 6.1621e+08 & \underline{5.9904e+08} & 6.1290e+08 & 6.1917e+08 & 6.1421e+08 \\
stock & 3.2968e-01 & 3.1504e-01 & 3.3426e-01 & 2.6611e-01 & \textbf{2.5553e-01} & 2.6503e-01 \\
socmob & 1.2850e+02 & 1.6114e+02 & 1.2702e+02 & 2.0405e+02 & 2.0284e+02 & 1.8475e+02 \\
Moneyball & 4.5700e+02 & 4.6038e+02 & 4.5565e+02 & 4.5700e+02 & 4.5700e+02 & 4.5401e+02 \\
insurance & 1.7691e+07 & 1.8086e+07 & 1.7632e+07 & 1.7875e+07 & 1.7726e+07 & 1.7860e+07 \\
weather\_izmir & 1.3768e+00 & \underline{1.3715e+00} & 1.3770e+00 & 1.4216e+00 & 1.4115e+00 & 1.3996e+00 \\
us\_crime & 1.7625e-02 & 1.7658e-02 & 1.7621e-02 & 1.7744e-02 & 1.7913e-02 & 1.7792e-02 \\
debutanizer & 3.5519e-03 & 3.8544e-03 & 3.5098e-03 & 4.9969e-03 & 3.5002e-03 & 4.7827e-03 \\
space\_ga & \underline{8.1149e-03} & 8.4248e-03 & 8.1879e-03 & 9.0069e-03 & 8.3849e-03 & 8.9144e-03 \\
mtp & 1.1819e-02 & 1.1858e-02 & 1.1805e-02 & 1.1969e-02 & 1.1755e-02 & 1.1939e-02 \\
wind & \underline{8.0532e+00} & 8.1497e+00 & 8.0664e+00 & 8.5639e+00 & 8.5666e+00 & 8.5599e+00 \\
bank32nh & 5.7391e-03 & 5.7144e-03 & 5.7371e-03 & \textbf{5.6783e-03} & 5.7042e-03 & \underline{5.6925e-03} \\
bank8FM & 8.6807e-04 & 8.8070e-04 & 8.6854e-04 & 9.0888e-04 & 8.9631e-04 & 9.1144e-04 \\
cpu\_act & 4.2041e+00 & 5.4098e+00 & 4.2214e+00 & 5.3814e+00 & \underline{4.2026e+00} & 5.1073e+00 \\
cpu\_small & \textbf{6.1522e+00} & 7.7043e+00 & \underline{6.1565e+00} & 7.0833e+00 & 6.2068e+00 & 7.0343e+00 \\
kin8nm & 5.0694e-03 & 6.0920e-03 & 5.0629e-03 & 6.9057e-03 & 5.3107e-03 & 6.4907e-03 \\
puma32H & 4.7965e-05 & 4.8519e-05 & 4.8045e-05 & 4.8455e-05 & 4.7824e-05 & 4.8868e-05 \\
puma8NH & 9.7711e+00 & \textbf{9.7532e+00} & \underline{9.7689e+00} & 9.8977e+00 & 9.8063e+00 & 9.8508e+00 \\
sulfur & 7.5556e-04 & 6.8126e-04 & 7.5840e-04 & 7.0458e-04 & 7.2464e-04 & 6.9111e-04 \\
rainfall\_bangladesh & 1.5373e+04 & 1.5281e+04 & 1.5408e+04 & 1.5417e+04 & 1.4941e+04 & 1.5397e+04 \\
kc\_house\_data & 2.1081e+10 & 1.9768e+10 & 2.1064e+10 & 1.9560e+10 & 1.9502e+10 & 1.9900e+10 \\
house\_8L & 7.5804e+08 & 7.4421e+08 & 7.5797e+08 & 7.5197e+08 & 7.5404e+08 & 7.4965e+08 \\
NewFuelCar & \underline{5.4681e-02} & 5.6992e-02 & \textbf{5.4533e-02} & 5.8940e-02 & 5.8370e-02 & 5.8970e-02 \\
electricity\_prices\_ICON & 3.8141e+02 & 4.1110e+02 & \underline{3.8131e+02} & 4.1964e+02 & 3.9244e+02 & 4.1944e+02 \\
OnlineNewsPopularity & \textbf{6.7155e+07} & 6.7675e+07 & 6.7540e+07 & 6.7707e+07 & 6.7869e+07 & 6.7728e+07 \\
2dplanes & 1.0750e+00 & 9.9207e-01 & 1.0752e+00 & 9.9183e-01 & 9.9585e-01 & 9.9172e-01 \\
mv & 1.8193e-03 & 2.2777e-03 & 1.8124e-03 & 3.0865e-03 & 2.1893e-03 & 2.9955e-03 \\
black\_friday & 1.1981e+07 & 1.1817e+07 & 1.1978e+07 & 1.1895e+07 & 1.1896e+07 & 1.1898e+07 \\
\bottomrule
\end{tabular}
\caption{Test Error - REG(Part II): The test score for each method. The best methods per dataset are shown in bold, while the second-best methods are underlined.}
\label{tab:rgs_results_details_part2}
\end{table*}
\begin{table*}[t]
\fontsize{9}{11}\selectfont
\centering
\begin{tabular}{lccc}
\toprule
Datasets & $\textsc{Best}$-Linear-L2 & AutoGluon- & \sys\\
\midrule
strikes & 3.2460e+05 & 3.1159e+05 & 3.1458e+05 \\
disclosure\_x\_noise & 8.5190e+08 & 8.5873e+08 & 8.4298e+08 \\
disclosure\_z & 6.1692e+08 & \textbf{5.9697e+08} & 6.1043e+08 \\
stock & \underline{2.5841e-01} & 3.2961e-01 & 2.9864e-01 \\
socmob & 2.0080e+02 & 1.2468e+02 & 1.3939e+02 \\
Moneyball & 4.5371e+02 & 4.5700e+02 & 4.5461e+02 \\
insurance & 1.7729e+07 & 1.7691e+07 & \textbf{1.7543e+07} \\
weather\_izmir & 1.4111e+00 & 1.3766e+00 & 1.3758e+00 \\
us\_crime & 1.8002e-02 & 1.7653e-02 & 1.7698e-02 \\
debutanizer & 3.4669e-03 & \underline{3.1567e-03} & \textbf{3.0108e-03} \\
space\_ga & 8.3903e-03 & \textbf{8.1136e-03} & 8.7202e-03 \\
mtp & \underline{1.1738e-02} & 1.1875e-02 & \textbf{1.1614e-02} \\
wind & 8.5606e+00 & \textbf{8.0476e+00} & 8.1334e+00 \\
bank32nh & 5.6977e-03 & 5.7386e-03 & 5.7100e-03 \\
bank8FM & 8.9523e-04 & \underline{8.6743e-04} & \textbf{8.6717e-04} \\
cpu\_act & \textbf{4.2022e+00} & 5.4057e+00 & 4.8873e+00 \\
cpu\_small & 6.2091e+00 & 7.5701e+00 & 7.4696e+00 \\
kin8nm & 5.2556e-03 & 5.0111e-03 & \textbf{4.8166e-03} \\
puma32H & 4.7801e-05 & 4.8314e-05 & \underline{4.7045e-05} \\
puma8NH & 9.8042e+00 & 9.8319e+00 & 9.7779e+00 \\
sulfur & 7.2344e-04 & 6.6906e-04 & 6.6811e-04 \\
rainfall\_bangladesh & 1.4921e+04 & 1.5370e+04 & \textbf{1.4598e+04} \\
kc\_house\_data & 1.9581e+10 & 1.9839e+10 & \textbf{1.8714e+10} \\
house\_8L & 7.5418e+08 & \underline{7.4225e+08} & \textbf{7.3476e+08} \\
NewFuelCar & 5.7688e-02 & 5.7062e-02 & 5.5005e-02 \\
electricity\_prices\_ICON & 3.9112e+02 & \textbf{3.8040e+02} & 3.8629e+02 \\
OnlineNewsPopularity & 6.7851e+07 & 6.7501e+07 & \underline{6.7486e+07} \\
2dplanes & 9.9713e-01 & 1.0671e+00 & 9.9183e-01 \\
mv & 2.2100e-03 & 1.7811e-03 & \textbf{1.7101e-03} \\
black\_friday & 1.1900e+07 & 1.1967e+07 & \textbf{1.1795e+07} \\
\bottomrule
\end{tabular}
\caption{Test Error -REG(Part III): The test score for each method. The best methods per dataset are shown in bold, while the second-best methods are underlined.}
\label{tab:rgs_results_details_part3}
\end{table*}

\end{document}